\documentclass[letterpaper, 10pt, conference]{ieeeconfarq}  

\IEEEoverridecommandlockouts                              

\overrideIEEEmargins                                      

\usepackage[pdftex]{graphicx}
\usepackage{cite}
\usepackage{verbatim}		
\usepackage{amsmath}
\usepackage{amssymb}

\usepackage{amsthm}

\usepackage{mathtools}	
\usepackage{grffile}	
\usepackage[tight,footnotesize]{subfigure}
\usepackage{microtype} 
\usepackage{color}
\usepackage{url}
\usepackage[ruled,vlined,linesnumbered]{algorithm2e}
\usepackage{bm} 
\usepackage{comment}
\usepackage{arydshln}

\usepackage{enumitem}

\usepackage{adjustbox}
\usepackage{tikz}
\usetikzlibrary{shadings, patterns, angles, quotes, arrows.meta, shapes, decorations.pathmorphing, decorations.shapes, decorations.text}
\usetikzlibrary{calc,intersections,arrows.meta}
\usepackage{pgfplots}

\usepackage{hyperref}
\hypersetup{
	colorlinks=true,
	linkcolor=blue,
	citecolor=red,
}

	\tikzset{
	pil/.style={
		->,
		thick,
		shorten <=2pt,
		shorten >=2pt,}
}





\newcommand{\cmmnt}[1]{}








\usepackage{amsthm}
\usepackage{amsfonts} 
\usepackage{amsmath}

\theoremstyle{plain}
\newtheorem{theorem}{Theorem}[section]

\newtheorem{corollary}[theorem]{Corollary}
\newtheorem{lemma}[theorem]{Lemma}
\newtheorem{prop}[theorem]{Proposition}
\theoremstyle{definition}

\newtheorem{definition}[theorem]{Definition}

\newtheorem{problema}[theorem]{Problem}

\theoremstyle{remark}
\newtheorem{rem}[theorem]{Remark}

\newcommand{\norm}[1]{\left| \left| {#1} \right| \right|}
\newcommand{\grad}{\vec{ \nabla}}

\title{\LARGE \bf
Source-Seeking Problem with Robot Swarms
}

\author{Antonio Acuaviva, Hector Garcia de Marina, Juan Jimenez 
	\thanks{This manuscript constitutes a summary of the first named author's (A. Acuaviva) Bachelor's dissertation, completed under the supervision of the second and third named authors (H. Garcia de Marina and J. Jimenez) at Universidad Complutense de Madrid.}}

\begin{document}

\maketitle
\thispagestyle{empty}
\pagestyle{empty}

\begin{abstract}
We present an algorithm to solve the problem of locating the source, or maxima, of a scalar field using a robot swarm. We demonstrate how the robot swarm determines its direction of movement to approach the source using only field intensity measurements taken by each robot. In contrast with the current literature, our algorithm accommodates a generic (non-degenerate) geometry for the swarm's formation. Additionally, we rigorously show the effectiveness of the algorithm even when the dynamics of the robots are complex, such as a unicycle with constant speed. Not requiring a strict geometry for the swarm significantly enhances its resilience. For example, this allows the swarm to change its size and formation in the presence of obstacles or other real-world factors, including the loss or addition of individuals to the swarm on the fly. For clarity, the article begins by presenting the algorithm for robots with free dynamics. In the second part, we demonstrate the algorithm's effectiveness even considering non-holonomic dynamics for the robots, using the vector field guidance paradigm. Finally, we verify and validate our algorithm with various numerical simulations.
\end{abstract}


\section{Introduction}

Source localization in scalar fields is currently considered one of the fundamental problems in swarm robotics. In particular, providing formal performance guarantees for an algorithm that is feasible to implement in practice is one of the major challenges in swarm robotics \cite{yang2018grand}. Effectively solving this problem would significantly impact how we monitor our environment \cite{ogren2004cooperative}, conduct search and rescue operations \cite{kumar2004robot}, and detect chemicals, sound sources, or pollutants \cite{marques2006particle, zhang2008maximum, li2006moth}. One of the key factors that make robot swarms ideal for such missions is their resilience; that is, the ability to maintain functionality even in the presence of adverse and unknown conditions. For instance, a robot swarm can continue operating and locate the source of the field even with the loss of (\emph{disposable}) robots at any time. Additionally, it possesses unlimited autonomy in time and energy, as robots can enter and leave the swarm as needed.

In this work, we propose a resilient algorithm for a robot swarm to solve the problem of source localization in a scalar field. This algorithm is practical for real-world applications for three key reasons: first, the robots only need to make point measurements of the field intensity at their positions; second, we allow for a generic non-degenerate deployment\footnote{For example, a degenerate deployment would be forming a line in the plane.} within the swarm that can vary arbitrarily; and third, we accommodate realistic restrictive robot dynamics, such as unicycles travelling at constant speed, similar to unmanned aerial vehicles \cite{yao2021singularity}.

Source localization in a field is associated with finding the maximum of a multivariable function, and various approaches to tackle this problem exist in the literature. One of the most common methods is the use of gradient descent techniques. If available, the signal gradient can be used to develop a gradient ascending/descent algorithm for a vehicle or a group of vehicles \cite{bachmayer2002vehicle}. However, in practice, robots can only measure the signal magnitude (scalar) and not the gradient (vector). Therefore, it is necessary to use the signal magnitudes in the space to estimate the gradient.

In the literature, it is common to require a specific spatial distribution for the robots in the swarm \cite{brinon2015distributed,brinon2019multirobot,brinon2019circular}, thus restricting their mobility and flexibility. Recently, the authors in \cite{al2021distributed} proposed an elegant control law where a specific spatial distribution for the robots is not required. However, the swarm's distribution is not controlled and could be considered \emph{chaotic} since it changes constantly over time depending on the initial positions of the robots. This can cause problems in certain scenarios, such as the presence of obstacles.

Our work proposes an alternative to gradient estimation by using an ascending direction to guide the process. By eliminating the need for gradient computation or estimation, the requirement for a specific geometry in robot deployment is removed. This approach allows for dynamic control over the deployment, enabling it to remain constant or adjust arbitrarily, thereby making the robot swarm more flexible and adaptable to real-world demands.

\section{The location problem}
\subsection{Preliminaries and Problem Formulation}

A signal can be modelled as a scalar field $\sigma(\vec{r})$, representing the signal's intensity at position $\vec{r} \in \mathbb{R}^n$, where $n = 2$ (plane) or $n = 3$ (space). Additionally, since we are examining signals generated by a source, this implies that there is a position $\vec{r}^*$, corresponding to the signal's source, which will be the maximum of our scalar field. Furthermore, additional smoothness conditions will be imposed, as outlined in the following definition.

\begin{definition}\label{def:señal}
    A \emph{signal distribution} is a scalar field $\sigma: \mathbb{R}^n \to \mathbb{R}^{+}$ that is continuous and twice differentiable, with globally bounded partial derivatives (up to second order). Additionally, we require that $\sigma$ has a unique maximum at $\vec{r}^*$, with $\nabla \sigma(\vec{r}) \neq 0$ for $\vec{r} \neq \vec{r}^*$, and $\lim_{\vec{r} \to \infty} \sigma(\vec{r}) = 0$.
\end{definition}

To model a swarm of $N$ robots, we can characterise each robot by its position $\vec{r}_i \in \mathbb{R}^n$, $i = 1, \dots, N$. Additionally, each robot will be equipped with a sensor that allows us to measure the signal, providing us with access to the field information at that point, $\sigma(\vec{r}_i)$.

We characterise a \emph{swarm of $N$ robots} by stacking their positions $\vec{r}_i \in \mathbb{R}^n$, for $i = 1, \dots, N$, into a single vector $\vec{R} = (\vec{r}_1^T, \dots, \vec{r}_N^T)^T \in \mathbb{R}^{nN}$. If we denote the centroid of the robot deployment by $\vec{r}_c := \frac{1}{N} \sum_{i=1}^N \vec{r}_i$, then we can express $\vec{r}_i = \vec{r}_c + \vec{x}_i$ for certain vectors $\vec{x}_i \in \mathbb{R}^n$ that describe the swarm's deployment, as shown in Figure \ref{fig:2}.

\begin{figure}[h]
    \centering
    \includegraphics[scale = 0.35]{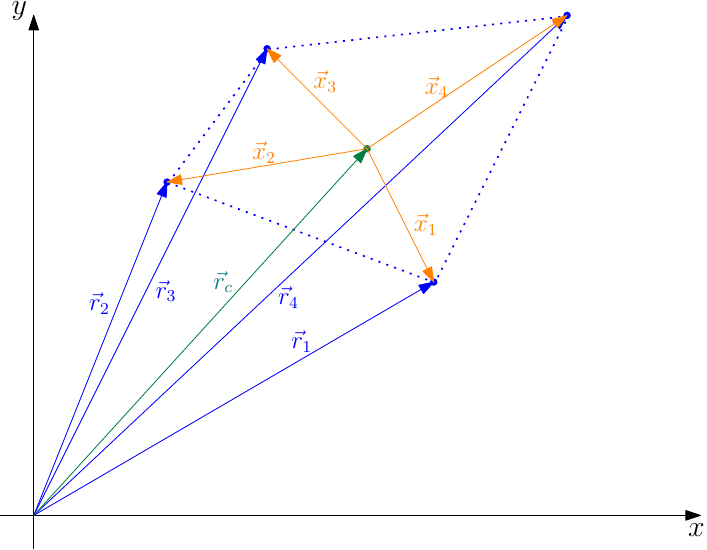}
    \caption{Deployment of a swarm of $N=4$ robots in the plane.}
    \label{fig:2}
\end{figure}

\begin{definition}
    We denote the \emph{geometry} of the swarm by the vector $\vec{X} = (\vec{x}_1^T, \dots, \vec{x}_N^T)^T$. Furthermore, we say that a geometry is \emph{non-degenerate} if the vectors $\{\vec{x}_1, \dots, \vec{x}_N\}$ span the space $\mathbb{R}^n$.
\end{definition}

Note that the non-degeneracy condition is natural, as it merely requires that the robots are positioned in such a way that allows for information extraction from the entire space. In this paper, we consider two types of dynamics for the robots: first, \emph{free dynamics} where we can directly control their velocity without restrictions, which can be expressed as
\begin{equation}
    \dot{\vec{r}}_i = \vec{u}_i(\vec{R}),
\label{eq: libre}
\end{equation}
where $\vec{u}_i(\vec{R}) \in \mathbb{R}^n$ is the velocity input or control signal; second, \emph{unicycle dynamics} with non-holonomic constraints in the plane ($n=2$), where we can only modify the direction of each robot, but not its speed. Thus, if we denote by $\alpha = (\alpha_1, \dots, \alpha_N) \in \mathbb{T}^N$ the vector that determines the velocity directions, we have
\begin{equation}\label{ec:dinámica}
    \dot{\vec{r}}_i = u_r \vec{m}_i(\alpha_i), \quad \vec{m}_i(\alpha_i) = \begin{bmatrix} \cos{(\alpha_i)} \\ \sin{(\alpha_i)} \end{bmatrix}, \quad \text{and} \quad \dot{\alpha}_i = \omega_i,
\end{equation}
where we can only act on the angular velocity $\omega_i(\vec{R}, \alpha) \in \mathbb{R}$, which determines the direction of motion $\alpha_i$ of the robot, and the speed $u_r \in \mathbb{R}^+$ is fixed. We are now ready to formalize the source-location problem.

\begin{problema}\label{prob-búsqueda}
Given a signal $\sigma$ and a swarm of $N$ robots, the search problem consists in finding a control law for the robots' actions such that, for a given $\epsilon > 0$, there exists a finite time $t_0$ such that $\norm{\vec{r}_c(t) - \vec{r}^*} < \epsilon$ for all $t \geq t_0.$
\end{problema}

\subsection{The ascending direction}
In \cite{brinon2019multirobot}, it is shown that
\begin{equation}\label{eq:grad}
    \hat{\nabla} \sigma(\vec{r}_c) = \frac{n}{ND^2} \sum_{i=1}^N \sigma (\vec{r}_i) (\vec{r}_i - \vec{r}_c),
\end{equation}
approximates the gradient at the centre of the circle/sphere in the particular case where the $N$ robots are equidistributed on a circle ($n = 2$, $N \geq 3$) or a sphere ($n=3$, $N>3$ and even) of radius $D$.

Building on this idea, we can construct an ascending direction that works for more general swarm geometries. In particular, we propose that
\begin{align}\label{eq:ascenso}
    \hat{L}_{\sigma}(\vec{r}_c,\vec{X}) &= \frac{1}{ND^2}\sum_{i=1}^{N} \sigma (\vec{r}_i) (\vec{r}_i - \vec{r}_c) \nonumber \\
    &= \frac{1}{ND^2}\sum_{i=1}^{N} \sigma (\vec{r}_c + \vec{x}_i) \vec{x}_i
\end{align}
approximates an ascending direction at the centroid, where $D = \max_{1 \leq i \leq N} ||\vec{x}_i||.$
Note that, unlike in \eqref{eq:grad} where the geometry is fixed, we must include the geometry $\vec{X}$ in the function $\hat{L}_{\sigma}$ to accommodate generic geometries. Furthermore, while \eqref{eq:grad} approximates the gradient for a specific geometry, \eqref{eq:ascenso} will approximate an ascending direction for a much wider range of geometries.

Until now, we have used the term \emph{approximation} somewhat loosely, and we will now provide a more precise notion of this. First, note that from the conditions in Definition \ref{def:señal}, it is straightforward to verify that there exist constants $K, M \in \mathbb{R}^+$ such that
\begin{equation}\label{eq:cotas2}
    || \nabla \sigma (\vec{r}) || \leq K \quad \text{and} \quad || H_\sigma(\vec{r}) || \leq 2M, \quad \forall \vec{r} \in \mathbb{R}^2,
\end{equation}
where $H_\sigma$ denotes the Hessian of the signal $\sigma$. Thus, for any $\vec{r}, \vec{r}_c \in \mathbb{R}^n$, it follows that
\begin{equation}\label{eq:cota1}
    \left| \sigma(\vec{r}) - \sigma(\vec{r}_c) - \nabla \sigma(\vec{r}_c) \cdot (\vec{r} - \vec{r}_c) \right| \leq M \norm{\vec{r} - \vec{r}_c}^2.
\end{equation}

We now specify what we mean by approximation.

\begin{prop}\label{th:dir-descenso}
    Let $\sigma$ be a signal distribution and $\vec{R} = (\vec{r}_1^T, \dots, \vec{r}_N^T)^T$ be a swarm of robots. Then we have
    \begin{equation*}
        \norm{\hat{L}_{\sigma}(\vec{r}_c, \vec{X}) - \vec{L}_{\sigma}(\vec{r}_c, \vec{X})} \leq MD
    \end{equation*}
    where
    \begin{equation*}
        \vec{L}_{\sigma}(\vec{r}_c, \vec{X}) = \frac{1}{ND^2} \sum_{i=1}^N \left( \nabla \sigma(\vec{r}_c) \cdot \vec{x}_i \right) \vec{x}_i
    \end{equation*}
    is an ascending direction at the centroid provided $\nabla \sigma (\vec{r}_c) \neq 0$ and the geometry $\vec{X}$ is non-degenerate.
\end{prop}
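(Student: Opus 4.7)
The plan is to split the statement into its two claims and handle each independently. For the norm bound, the main tool will be the second-order Taylor-type estimate \eqref{eq:cota1}; for the "ascending direction" claim, I will rewrite $\vec{L}_\sigma$ as the action of a symmetric positive (semi)definite matrix on $\nabla\sigma(\vec{r}_c)$ and use the non-degeneracy hypothesis to upgrade "semidefinite" to "definite."

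For the bound, I would apply \eqref{eq:cota1} with $\vec{r} = \vec{r}_c + \vec{x}_i$ to obtain, for each $i$, a remainder $R_i := \sigma(\vec{r}_c + \vec{x}_i) - \sigma(\vec{r}_c) - \nabla\sigma(\vec{r}_c)\cdot \vec{x}_i$ with $|R_i| \leq M\|\vec{x}_i\|^2 \leq MD^2$. Plugging this decomposition into the definition \eqref{eq:ascenso} of $\hat{L}_\sigma$ yields three sums: one proportional to $\sigma(\vec{r}_c)\sum_i \vec{x}_i$, one equal to $\vec{L}_\sigma(\vec{r}_c, \vec{X})$, and a remainder term $\frac{1}{ND^2}\sum_i R_i \vec{x}_i$. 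The first sum vanishes because $\vec{r}_c$ is the centroid, so $\sum_i \vec{x}_i = \vec{0}$ by construction. The remainder is controlled by the triangle inequality:
\begin{equation*}
\Bigl\| \tfrac{1}{ND^2}\sum_i R_i \vec{x}_i \Bigr\| \leq \tfrac{1}{ND^2}\cdot N \cdot (MD^2)(D) = MD,
\end{equation*}
which is exactly the claimed bound.

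For the ascending-direction claim, I would rewrite
\begin{equation*}
\vec{L}_\sigma(\vec{r}_c, \vec{X}) = \frac{1}{ND^2}\Bigl(\sum_{i=1}^N \vec{x}_i \vec{x}_i^T\Bigr)\nabla\sigma(\vec{r}_c) = \frac{1}{ND^2}\, P_{\vec{X}}\,\nabla\sigma(\vec{r}_c),
\end{equation*}
so that checking $\nabla\sigma(\vec{r}_c)\cdot \vec{L}_\sigma > 0$ reduces to showing that the Gram-type matrix $P_{\vec{X}} = \sum_i \vec{x}_i \vec{x}_i^T$ is positive definite. This matrix is always symmetric positive semidefinite, and its kernel consists precisely of the vectors orthogonal to every $\vec{x}_i$; hence $P_{\vec{X}}$ is positive definite exactly when $\{\vec{x}_1,\dots,\vec{x}_N\}$ spans $\mathbb{R}^n$, which is the non-degeneracy hypothesis. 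Combined with $\nabla\sigma(\vec{r}_c)\neq 0$, this gives $\nabla\sigma(\vec{r}_c)^T P_{\vec{X}}\,\nabla\sigma(\vec{r}_c) > 0$, so $\vec{L}_\sigma(\vec{r}_c,\vec{X})$ makes a strictly acute angle with $\nabla\sigma(\vec{r}_c)$ and is thus an ascending direction.

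I do not anticipate a serious obstacle: the only subtlety is recognizing that the zeroth-order term $\sigma(\vec{r}_c)\sum_i \vec{x}_i$ is killed automatically by the centroid relation — without this cancellation the estimate would degrade by a term of order $\sigma(\vec{r}_c)/D$, which would be useless. Everything else is Cauchy–Schwarz-type bookkeeping and a standard fact about Gram matrices.
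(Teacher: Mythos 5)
Your proposal is correct and follows essentially the same route as the paper: the same Taylor decomposition with the centroid cancellation $\sum_i \vec{x}_i = \vec{0}$ and the bound $\frac{1}{ND^2}\sum_i |R_i|\,\|\vec{x}_i\| \leq MD$, and the same positivity argument for the ascent claim (your Gram matrix $P_{\vec{X}}$ being positive definite is just the paper's identity $\nabla\sigma(\vec{r}_c)\cdot\vec{L}_\sigma = \frac{1}{ND^2}\sum_i |\nabla\sigma(\vec{r}_c)\cdot\vec{x}_i|^2 > 0$ written in matrix form). No gaps.
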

\begin{proof}
For the first part note that
\begin{equation*}
    \sum_{i=1}^N \vec{x}_i = 0 \implies \sum_{i=1}^n \sigma(\vec{r}_c) \vec{x}_i = 0
\end{equation*}
combined with \eqref{eq:cota1} gives an upper-bound for $\norm{\hat{L}_{\sigma}(\vec{r}_c, \vec{X}) - \vec{L}_{\sigma}(\vec{r}_c, \vec{X})}$ in the form
\begin{align*}
    \frac{1}{ND^2} \norm{\sum_{i=1}^N [\sigma(\vec{r}_c + \vec{x}_i) - \sigma(\vec{r}_c) - \nabla \sigma(\vec{r}_c) \cdot \vec{x}_i] \vec{x}_i} \\
    \stackrel{\eqref{eq:cota1}}{\leq} \frac{1}{ND^2} \sum_{i=1}^N M ||\vec{x}_i||^3 \leq MD.
\end{align*}
To see that this is an ascending direction, observe that
\begin{equation*}
    \nabla \sigma(\vec{r}_c) \cdot \vec{L}_{\sigma}(\vec{r}_c, \vec{X}) = \frac{1}{ND^2} \sum_{i=1}^N \left| \nabla \sigma(\vec{r}_c) \cdot \vec{x}_i \right|^2 > 0,
\end{equation*}
where the inequality is trivial provided $\nabla \sigma(\vec{r}_c) \neq 0$ and $\{\vec{x_1}, \dots, \vec{x}_N\}$ span the space.
\end{proof}

Proposition \ref{th:dir-descenso} guarantees that the distance between the two directions $\hat{L}_{\sigma}(\vec{r}_c, \vec{X})$ and $\vec{L}_{\sigma}(\vec{r}_c, \vec{X})$ decreases linearly with $D$, allowing us to make this distance arbitrarily small.

\begin{rem}
It is worth noting that if the robots are uniformly distributed on a circle, the previous result simplifies, up to a factor, to Theorem 1 in \cite{brinon2019multirobot}, and we obtain $\vec{L}_{\sigma}(\vec{r}_c, \vec{X}) = \frac{1}{2} \nabla \sigma(\vec{r}_c)$.
\end{rem}

It is interesting to observe that the ascending direction $\vec{L}_{\sigma}(\vec{r}_c, \vec{X})$ \emph{controls} the gradient, provided that the geometry $\vec{X}$ remains constant. This is illustrated by the following lemma.

\begin{lemma}\label{lema-Ltograd}
If the geometry $\vec{X}$ is non-degenerate, then there exists a constant $C(\vec{X}) > 0$, which depends solely on the swarm's geometry, such that
\begin{equation*}
    \frac{1}{C(\vec{X})} \|\nabla \sigma(\vec{r}_c)\|^2 \leq \vec{L}_{\sigma}(\vec{r}_c, \vec{X}) \cdot \nabla \sigma(\vec{r}_c) \leq C(\vec{X}) \|\nabla \sigma(\vec{r}_c)\|^2.
\end{equation*}
\end{lemma}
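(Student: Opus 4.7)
The plan is to rewrite the dot product $\vec{L}_\sigma(\vec{r}_c,\vec{X})\cdot\nabla\sigma(\vec{r}_c)$ as a quadratic form in $\nabla\sigma(\vec{r}_c)$, and then invoke the non-degeneracy of $\vec{X}$ to pinch this quadratic form between two positive multiples of $\|\nabla\sigma(\vec{r}_c)\|^2$.

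First, I would expand the dot product directly from the definition of $\vec{L}_\sigma$. Writing $\vec{g}=\nabla\sigma(\vec{r}_c)$ for brevity,
\begin{equation*}
\vec{L}_\sigma(\vec{r}_c,\vec{X})\cdot\vec{g} \;=\; \frac{1}{ND^2}\sum_{i=1}^N (\vec{g}\cdot\vec{x}_i)^2 \;=\; \frac{1}{ND^2}\,\vec{g}^{\,T} A(\vec{X})\, \vec{g},
\end{equation*}
where $A(\vec{X}) := \sum_{i=1}^N \vec{x}_i \vec{x}_i^{\,T}$. This rewriting is the core of the argument: the quantity we want to estimate is literally the Rayleigh quotient associated with $A(\vec{X})$ (up to the scalar $\tfrac{1}{ND^2}$).

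Next I would verify that $A(\vec{X})$ is symmetric positive definite whenever the geometry is non-degenerate. Symmetry and positive semi-definiteness are immediate because $A(\vec{X})$ is a sum of rank-one outer products. For strict positive definiteness, note that $\vec{g}^{\,T}A(\vec{X})\vec{g}=\sum_i(\vec{g}\cdot\vec{x}_i)^2=0$ forces $\vec{g}\perp\vec{x}_i$ for every $i$; since $\{\vec{x}_1,\dots,\vec{x}_N\}$ spans $\mathbb{R}^n$ by hypothesis, this gives $\vec{g}=0$. Hence the eigenvalues $0<\lambda_{\min}(A(\vec{X}))\le\lambda_{\max}(A(\vec{X}))$ are both strictly positive and depend only on $\vec{X}$.

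The lemma then follows from the standard two-sided Rayleigh bound
\begin{equation*}
\lambda_{\min}(A(\vec{X}))\|\vec{g}\|^2 \;\le\; \vec{g}^{\,T}A(\vec{X})\vec{g} \;\le\; \lambda_{\max}(A(\vec{X}))\|\vec{g}\|^2,
\end{equation*}
dividing by $ND^2$, and choosing
\begin{equation*}
C(\vec{X}) \;:=\; \max\!\left(\frac{ND^2}{\lambda_{\min}(A(\vec{X}))},\;\frac{\lambda_{\max}(A(\vec{X}))}{ND^2}\right),
\end{equation*}
which depends only on $\vec{X}$ (since $N$ and $D=\max_i\|\vec{x}_i\|$ are functions of $\vec{X}$). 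I do not anticipate a serious obstacle: the only non-trivial point is checking that $A(\vec{X})$ is positive definite rather than merely semi-definite, and this is exactly what the spanning hypothesis in the definition of non-degeneracy supplies.
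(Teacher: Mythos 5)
Your proof is correct. It differs from the paper's argument in packaging rather than in substance, but the difference is worth noting. The paper proves the same two-sided bound by fixing $\vec{X}$, viewing $\vec{g}\mapsto \vec{L}_\sigma\cdot\vec{g}$ restricted to $\|\vec{g}\|=1$ as a continuous positive function on the compact sphere $\mathbb{S}^{n-1}$, and taking $C$ larger than the reciprocal of its minimum and than its maximum; homogeneity of degree two then extends the bound to all $\vec{g}$. You instead identify the quantity explicitly as the Rayleigh quotient of the scatter matrix $A(\vec{X})=\sum_i\vec{x}_i\vec{x}_i^{\,T}$, observe that non-degeneracy is exactly positive definiteness of $A(\vec{X})$, and read off the constants as $\lambda_{\min}$ and $\lambda_{\max}$ scaled by $ND^2$. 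The two proofs yield the same constants (the extrema of the quadratic form on the sphere \emph{are} the extreme eigenvalues), but yours is more informative: it produces an explicit, computable $C(\vec{X})$ and makes the role of the spanning hypothesis transparent as invertibility of $A(\vec{X})$. It also sidesteps a small slip in the paper's final line, where the constant is written as $C>\max\{M_*,1/M_*\}$ rather than $C>\max\{M^*,1/M_*\}$; your explicit formula has the two bounds in the right places. One cosmetic remark: the degenerate case $\nabla\sigma(\vec{r}_c)=0$, which the paper dispatches in its first sentence, is silently covered by your Rayleigh inequality since all three members vanish, so nothing is missing.
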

\begin{proof}
    If $\nabla \sigma(\vec{r}_c) = 0$, the result is immediate. Otherwise, we have
    \begin{equation*}
        \vec{L}_{\sigma}(\vec{r}_c, \vec{X}) \cdot \nabla \sigma(\vec{r}_c) = \frac{1}{ND^2} \sum_{i=1}^N \left| \nabla \sigma(\vec{r}_c) \cdot \vec{x}_i \right|^2 > 0,
    \end{equation*}
    because the geometry is non-degenerate. Since the geometry $\vec{X}$ is fixed, the above scalar product can be considered as a continuous function $L: \mathbb{R}^n \to \mathbb{R}$, where $\nabla \sigma(\vec{r}_c)$ is now a variable parameter. By homogeneity in the inequalities, it is enough to analyze the case $\|\nabla \sigma(\vec{r}_c)\| = 1$, i.e., vectors on the unit sphere $\mathbb{S}^{n-1}$, and to show that there exists a constant $C(\vec{X}) > 0$ such that
    \begin{equation*}
        \frac{1}{C(\vec{X})} < \vec{L}_{\sigma}(\vec{r}_c, \vec{X}) \cdot \nabla \sigma(\vec{r}_c) < C(\vec{X}).
    \end{equation*}
    This is immediate since the unit sphere is compact and the function $L$ is continuous, in particular, it attains a maximum $M^*$ and a minimum $M_* > 0$, as it never becomes zero. Taking $C > \max{\{M_*, 1/M_*\}}$ gives the result.
\end{proof}

Note that this control indicates that, up to a constant factor, following the ascending direction $\vec{L}_{\sigma}(\vec{r}_c, \vec{X})$ is as effective as following the direction given by the gradient.

\section{Convergence results}

\subsection{Free Dynamics case}

In this section, we will consider the dynamics given by equation (\ref{eq: libre}) and assume that the robot swarm can directly follow the field $\vec{L}_{\sigma}(\vec{R}, \vec{X})$. We will show that under these conditions the centroid of the swarm converges to the source of the field $\sigma$. Furthermore, as an extension to practical real-world implementations, we will demonstrate that the approximation $\hat{L}_{\sigma}$ is also effective.

\begin{theorem}\label{th:convergencia-1}
Let $\sigma$ be a signal and $\vec{R}_0 = (\vec{r}_{1,0}^T, \dots, \vec{r}_{N,0}^T)^T$ be a swarm of $N$ robots with a non-degenerate geometry. Then the free dynamic
\begin{align*}
    \dot{\vec{r}}_i(t) &= \vec{L}_{\sigma}(\vec{r}_c(t), \vec{X}(t)), \hspace{10pt} i = 1, \dots, N \\
    \vec{r}_i(0) &= \vec{r}_{i,0},
\end{align*}
gives a solution to the search problem \ref{prob-búsqueda}.
\end{theorem}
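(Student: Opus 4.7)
The plan is to first reduce the $nN$-dimensional system to an autonomous ODE on $\mathbb{R}^n$ for the centroid alone, and then run a Lyapunov/Barbalat argument with $\sigma$ playing the role of the ``energy''.

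The key observation is that the prescribed control is the \emph{same} vector $\vec{L}_{\sigma}(\vec{r}_c(t),\vec{X}(t))$ for every robot $i$. Therefore $\dot{\vec{x}}_i = \dot{\vec{r}}_i - \dot{\vec{r}}_c = 0$ for all $i$, so the geometry is preserved: $\vec{X}(t)\equiv \vec{X}_0$, which is non-degenerate by hypothesis. Consequently $\vec{r}_c$ satisfies the closed-loop autonomous system
\begin{equation*}
\dot{\vec{r}}_c(t) = \vec{L}_{\sigma}(\vec{r}_c(t),\vec{X}_0),
\end{equation*}
and local existence/uniqueness of solutions follows from the smoothness of $\sigma$.

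Next I would use $\sigma$ as a Lyapunov-type function to show trajectories stay in a compact set and to extract energy decay. Applying Lemma \ref{lema-Ltograd} directly,
\begin{equation*}
\dt \sigma(\vec{r}_c(t)) = \nabla\sigma(\vec{r}_c)\cdot \vec{L}_{\sigma}(\vec{r}_c,\vec{X}_0)\geq \frac{1}{C(\vec{X}_0)}\|\nabla\sigma(\vec{r}_c)\|^2 \geq 0.
\end{equation*}
Thus $\sigma(\vec{r}_c(t))$ is non-decreasing, so $\sigma(\vec{r}_c(t))\geq \sigma(\vec{r}_c(0))>0$ for all $t\geq 0$. Since $\lim_{\vec{r}\to\infty}\sigma(\vec{r})=0$, this forces $\vec{r}_c(t)$ to remain in a compact sublevel set $\Omega$. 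In particular the trajectory is defined for all $t\geq 0$, and $\sigma(\vec{r}_c(t))$ is monotone and bounded above by $\sigma(\vec{r}^*)$, hence converges.

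To turn this into convergence of the position, I would invoke Barbalat's lemma. Integrating the differential inequality above yields
\begin{equation*}
\int_0^{\infty} \|\nabla\sigma(\vec{r}_c(t))\|^2\,dt \leq C(\vec{X}_0)\bigl(\sigma(\vec{r}^*)-\sigma(\vec{r}_c(0))\bigr)<\infty.
\end{equation*}
The integrand is uniformly continuous in $t$: $\nabla\sigma$ is globally Lipschitz by the Hessian bound \eqref{eq:cotas2}, and $\dot{\vec{r}}_c$ is uniformly bounded on $\Omega$ (since $\|\vec{L}_\sigma\|$ is controlled by the global bound on $\|\nabla\sigma\|$). Barbalat's lemma then gives $\|\nabla\sigma(\vec{r}_c(t))\|\to 0$. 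Finally, since $\vec{r}_c(t)$ lives in the compact set $\Omega$ and the only zero of $\nabla\sigma$ in $\mathbb{R}^n$ is $\vec{r}^*$ (Definition \ref{def:señal}), a standard subsequence/compactness argument shows $\vec{r}_c(t)\to \vec{r}^*$, which solves Problem \ref{prob-búsqueda}.

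The main obstacle I anticipate is the last step: extracting convergence of $\vec{r}_c$ from convergence of $\|\nabla\sigma(\vec{r}_c)\|$. One must rule out ``drifting'' trajectories whose limit set contains points other than $\vec{r}^*$; compactness of $\Omega$ together with uniqueness of the critical point of $\sigma$ handles this cleanly, but it is the step where the global assumptions on $\sigma$ are genuinely used.
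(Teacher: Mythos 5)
Your proposal is correct and follows essentially the same route as the paper: observe that the common control input preserves the geometry $\vec{X}(t)\equiv\vec{X}_0$, use $\sigma(\vec{r}_c)$ as a Lyapunov function, and conclude via Barbalat's lemma. The only differences are cosmetic — you apply the integral form of Barbalat to $\|\nabla\sigma(\vec{r}_c)\|^2$ and close with a compactness/subsequence argument where the paper invokes LaSalle, and along the way you make explicit a few details (compactness of the sublevel set, uniform continuity of the integrand) that the paper leaves implicit.
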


\begin{proof}
First, observe that $\vec{X}(0) = \vec{X}(t)$ for all $t > 0$ because the robots are moving in unison. Therefore, we can consider the Lyapunov function $V(\vec{r}_c) = \sigma(\vec{r}_c)$, such that throughout the trajectory we have
\begin{align*}
    \dot{V}(t) &= \dot{\vec{r}}_c(t) \cdot \nabla \sigma(\vec{r}_c(t)) \\
    &= \frac{1}{N} \sum_{i=1}^{N} \dot{\vec{r}}_i(t) \cdot \nabla \sigma(\vec{r}_c(t)) \\
    &= \frac{1}{N} \sum_{i=1}^{N} \vec{L}_{\sigma}(\vec{r}_c(t), \vec{X}(0)) \cdot \nabla \sigma(\vec{r}_c(t)) \\
    &= \vec{L}_{\sigma}(\vec{r}_c(t), \vec{X}(0)) \cdot \nabla \sigma(\vec{r}_c(t)) \geq 0.
\end{align*}
Since the geometry $\vec{X}(0)$ is non-degenerate, equality holds if and only if $\nabla \sigma(\vec{r}_c(t)) = 0$, which implies that $\vec{r}_c(t) = \vec{r}^*$. Consequently, $V(t)$ is non-decreasing and bounded throughout the trajectory. Given that the Hessian of $\sigma$ is bounded, $\ddot{V}(t)$ is also uniformly bounded. Therefore, by applying Barbalat's lemma in conjunction with the LaSalle invariance principle \cite{khalil2015nonlinear, nikravesh2018nonlinear}, we can conclude that $\lim_{t \to \infty} \vec{r}_c(t) = \vec{r}^*$.
\end{proof}


\begin{corollary}\label{corollary-convergencia1}
For any $\epsilon > 0$ there exists $D^*(\epsilon)$ such that, whenever $D \leq D^*$, Theorem \ref{th:convergencia-1} remains a solution to the search problem when $\vec{L}_\sigma$ is replaced by $\hat{L}_\sigma$.
\end{corollary}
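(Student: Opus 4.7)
The plan is to adapt the Lyapunov argument of Theorem \ref{th:convergencia-1} with the Lyapunov function $V(\vec{r}_c) := \sigma(\vec{r}_c)$, using Proposition \ref{th:dir-descenso} to absorb the approximation error between $\hat{L}_\sigma$ and $\vec{L}_\sigma$ and Lemma \ref{lema-Ltograd} to retain an ascent-type inequality. Exactly as in Theorem \ref{th:convergencia-1}, the geometry $\vec{X}(t) \equiv \vec{X}(0)$ is preserved along the trajectory because every robot shares the common velocity $\hat{L}_\sigma(\vec{r}_c,\vec{X})$, so the constant $C := C(\vec{X}(0))$ from Lemma \ref{lema-Ltograd} stays fixed.

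Differentiating and splitting gives $\dot V = \nabla\sigma(\vec{r}_c)\cdot\vec{L}_\sigma + \nabla\sigma(\vec{r}_c)\cdot(\hat{L}_\sigma - \vec{L}_\sigma)$. Lemma \ref{lema-Ltograd} lower-bounds the first term by $\|\nabla\sigma(\vec{r}_c)\|^2/C$, while Proposition \ref{th:dir-descenso} together with Cauchy--Schwarz bounds the magnitude of the second by $MD\|\nabla\sigma(\vec{r}_c)\|$. This yields the key inequality
\begin{equation*}
    \dot V \geq \|\nabla\sigma(\vec{r}_c)\|\left(\tfrac{\|\nabla\sigma(\vec{r}_c)\|}{C} - MD\right),
\end{equation*}
which is strictly positive wherever $\|\nabla\sigma(\vec{r}_c)\| > MDC$.

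Given $\epsilon > 0$, I would pick a level $c_\epsilon < \sigma(\vec{r}^*)$ (and, without loss of generality, strictly larger than $V_0 := \sigma(\vec{r}_c(0))$) such that $\Omega_{c_\epsilon} := \{\vec{r} : \sigma(\vec{r}) \geq c_\epsilon\} \subset B_\epsilon(\vec{r}^*)$; this is possible because the super-level sets shrink to $\{\vec{r}^*\}$ as $c_\epsilon \uparrow \sigma(\vec{r}^*)$. By compactness of $\partial B_\epsilon(\vec{r}^*)$, the inclusion also forces $\max_{\partial B_\epsilon(\vec{r}^*)} \sigma < c_\epsilon$ strictly. On the two compact sets $\{V_0 \leq \sigma \leq c_\epsilon\}$ and $\overline{B_\epsilon(\vec{r}^*)} \setminus \Omega_{c_\epsilon}^\circ$, the gradient $\|\nabla\sigma\|$ attains positive minima $\eta_1, \eta_2 > 0$, since $\nabla\sigma$ vanishes only at $\vec{r}^* \in \Omega_{c_\epsilon}^\circ$. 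Taking $D^*(\epsilon) < \min\{\eta_1,\eta_2\}/(MC)$ then renders $\dot V$ uniformly strictly positive on both regions.

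The main obstacle is gluing these two positivity regions into a single convergence-and-confinement statement. First, while $\vec{r}_c$ remains in $\{V_0 \leq \sigma \leq c_\epsilon\}$, the uniform lower bound on $\dot V$ combined with the upper bound $V \leq c_\epsilon$ forces $\vec{r}_c$ to reach $\Omega_{c_\epsilon} \subset B_\epsilon(\vec{r}^*)$ in finite time $t_0$. Second, any subsequent attempt to leave $B_\epsilon(\vec{r}^*)$ requires crossing the annular region $\overline{B_\epsilon(\vec{r}^*)} \setminus \Omega_{c_\epsilon}^\circ$, on which $\dot V > 0$ forces $V$ to strictly exceed $c_\epsilon$ before the boundary $\partial B_\epsilon(\vec{r}^*)$ (where $\sigma < c_\epsilon$) can be reached, so the trajectory must re-enter $\Omega_{c_\epsilon}$ first. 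Together these two facts give $\|\vec{r}_c(t)-\vec{r}^*\| < \epsilon$ for all $t \geq t_0$, which is precisely the statement of Problem \ref{prob-búsqueda}.
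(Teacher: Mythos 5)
Your proof is correct and follows essentially the same route as the paper: the paper's own proof is a two-line sketch that writes $\vec{L}_\sigma = \hat{L}_\sigma + \vec{E}$ with $\norm{\vec{E}} \leq MD$ via Proposition \ref{th:dir-descenso} and asserts that the Lyapunov argument of Theorem \ref{th:convergencia-1} survives the perturbation, and you have simply supplied the details it leaves implicit (the lower bound $\dot V \geq \|\nabla\sigma\|\left(\|\nabla\sigma\|/C - MD\right)$ from Lemma \ref{lema-Ltograd}, the choice of level set $c_\epsilon$, and the entry-plus-confinement argument). The only caveat is that your threshold $D^*$ depends on $C(\vec{X}(0))$ and hence on the swarm's shape, so the corollary must be read as scaling down a fixed non-degenerate geometry rather than as uniform over all swarms with $D \leq D^*$ --- an imprecision already present in the paper's statement, not one you introduced.
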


\begin{proof}
We can write $\vec{L}_\sigma = \hat{L}_\sigma + \vec{E}$, where $\vec{E}$ represents an error term with $\norm{\vec{E}} \leq MD$, as established in Proposition \ref{th:dir-descenso}. By choosing $D$ sufficiently small, it is straightforward to verify that convergence to an $\epsilon$-neighborhood in finite time is maintained in the proof of Theorem \ref{th:convergencia-1}.
\end{proof}

\subsection{Non-Holonomic Swarm}\label{sec: non-holon}

In this section, we extend our analysis by considering the constant-speed unicycle dynamics in the plane, as described in \eqref{ec:dinámica}. For clarity, we defer the formal proofs of the results to Appendix \ref{appendix}.

To guide the movement of our system, we will draw inspiration from the method of \emph{guiding fields}, as developed in \cite{kapitanyuk2017guiding,de2017guidance}. However, these methods depend on global knowledge of the field to compute quantities (such as gradients and descent directions) across the entire space. In our case, there are two challenges: first, we do not know the field until it has been explored by the trajectory; second, the field we wish to follow may depend on the geometry of the robot swarm, not just the spatial point of the centroid.

To understand the latter, note that Theorem \ref{th:dir-descenso} provides a field to follow in order to find the maximum, but this ascending direction depends not only on the point $\vec{r}_c$ but also on the swarm geometry $\vec{X}$.

For practical purposes, it is convenient to normalize the guiding field. We consider the open set $U \subseteq \mathbb{R}^{2N}$ characterized by $\vec{L}_\sigma(\vec{R}) \neq 0$ and define $\vec{m}_d: U \to \mathbb{R}^2$ by
\begin{equation}\label{eq:campoguia}
    \vec{m}_d(\vec{R}) = \frac{\vec{L}_\sigma(\vec{R})}{\norm{\vec{L}_\sigma(\vec{R})}},
\end{equation}
so that $\norm{\vec{m}_d(\vec{R})} = 1$. This field will be referred to as the \emph{guiding field} and is the field we wish to follow to find the maximum.

Intuitively, to follow the field, the natural approach is to reduce the angle between each robot’s velocity and the direction of the field. To formalize this idea, we need to define new concepts and prove certain preliminary results. We begin by studying the behaviour of the derivative of the guiding field.

\begin{lemma}\label{lema:derivativeCGI}
Let $\vec{m}_d(t) = \vec{m}_d(\vec{R}(t))$ be the guiding field throughout the swarm's trajectory. Then
\begin{equation*}
    \dot{\vec{m}}_d(t) = \omega_{d}(\vec{R}(t), \alpha(t)) E \vec{m}_d(t),
\end{equation*}
where $\omega_{d}: \mathbb{R}^2 \to \mathbb{R}$ is a continuous, uniquely determined function and $E = \begin{bmatrix} 0 & -1 \\ 1 & 0 \end{bmatrix}$.
\end{lemma}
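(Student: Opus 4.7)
The plan is to exploit the fact that $\vec{m}_d(t)$ is a unit vector in $\mathbb{R}^2$, which forces its derivative to be perpendicular to itself and hence proportional to $E\vec{m}_d$. The scalar of proportionality is then identified as the desired $\omega_d$, and its continuity will follow from the smoothness of the underlying quantities.

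First I would differentiate the identity $\vec{m}_d(t) \cdot \vec{m}_d(t) = 1$ with respect to $t$ to obtain $\vec{m}_d(t) \cdot \dot{\vec{m}}_d(t) = 0$. Since we are in the plane, the orthogonal complement of $\mathrm{span}\{\vec{m}_d(t)\}$ is one-dimensional and is spanned by $E\vec{m}_d(t)$ (note that $E$ is the $90^\circ$ rotation, so $\{\vec{m}_d, E\vec{m}_d\}$ is an orthonormal basis of $\mathbb{R}^2$). Therefore there is a unique scalar $\omega_d(t)$ with $\dot{\vec{m}}_d(t) = \omega_d(t)\, E\vec{m}_d(t)$, obtained explicitly by the projection $\omega_d(t) = (E\vec{m}_d(t))^\top \dot{\vec{m}}_d(t)$. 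Uniqueness is immediate from the fact that $E\vec{m}_d(t) \neq 0$.

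Next I would show that $\omega_d$ is a continuous function of $(\vec{R},\alpha)$. By the chain rule,
\begin{equation*}
    \dot{\vec{m}}_d(t) = \sum_{i=1}^N \frac{\partial \vec{m}_d}{\partial \vec{r}_i}(\vec{R}(t))\, \dot{\vec{r}}_i(t) = u_r \sum_{i=1}^N \frac{\partial \vec{m}_d}{\partial \vec{r}_i}(\vec{R}(t))\, \vec{m}_i(\alpha_i(t)),
\end{equation*}
using the unicycle dynamics \eqref{ec:dinámica}. Since $\sigma$ is twice continuously differentiable, $\vec{L}_\sigma$ is $C^1$ in $\vec{R}$, and on the open set $U$ where $\vec{L}_\sigma \neq 0$ the normalization $\vec{m}_d = \vec{L}_\sigma / \|\vec{L}_\sigma\|$ is also $C^1$. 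Substituting into the projection formula for $\omega_d$ then yields a closed-form expression in terms of $\vec{R}$ and $\alpha$, which is manifestly continuous on $U \times \mathbb{T}^N$.

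There is no real obstacle here; the only subtlety is making clear that $\omega_d$ depends on both $\vec{R}$ and $\alpha$ (through the velocities $\vec{m}_i(\alpha_i)$), which is precisely what the statement claims. The lemma is essentially a kinematic observation about unit-vector fields in the plane, and the content is confined to identifying the proportionality factor and verifying its regularity.
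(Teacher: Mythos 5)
Your proposal is correct and follows essentially the same route as the paper: differentiate $\|\vec{m}_d\|^2=1$ to get orthogonality, identify $\omega_d$ as the projection onto $E\vec{m}_d$, and deduce continuity from the regularity of $\vec{m}_d$ and $\dot{\vec{m}}_d$. The only difference is that you spell out the continuity step via the chain rule and the $C^1$ regularity of $\vec{L}_\sigma$ on $U$, where the paper simply asserts continuous dependence; this is a welcome but inessential elaboration.
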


We introduce the directed angle, $\delta_i (\vec{r}, \alpha_i) \in (-\pi, \pi]$, between $\vec{m}_d$ and $\vec{m}_i$. The function $\delta_i$ is defined at any point $(\vec{R}, \alpha)$ where $\vec{m}_d$ is defined and is of class $C^1$ at points where $\vec{m}_i (\alpha_i) \ne -\vec{m}_d(\vec{R})$, as depicted in Figure \ref{fig:3}.

\begin{figure}[h]
    \centering
    \includegraphics[width = 4cm, height = 4cm]{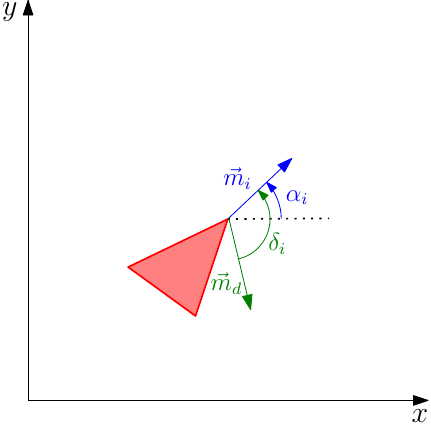}
    \caption{Orientation of a robot and directed angles.}
    \label{fig:3}
\end{figure}

It is straightforward to observe that throughout the trajectory, the change in $\delta_i$ will be governed by both the change in $\alpha_i$ and the variation in the angle subtended by the field $\vec{m}_d$. Hence, at any point where $\dot{\delta}_i(t)$ is defined, i.e., where $\delta_i(t) < \pi$, the following fundamental identity holds
\begin{equation}\label{eq:fundamental}
    \dot{\delta}_i = \omega_i - \omega_d.
\end{equation}

Using this relationship, we can address the issue of discontinuities in $\delta_i$ within our system by introducing an augmented system that includes the parameter $\delta_{i,*}$. This leads to the system

\begin{align}
\begin{cases}
    \dot{\Vec{r}}_i &= u_r \begin{bmatrix} \cos{(\alpha_i)} & \sin{(\alpha_i)} \end{bmatrix}^T  \\
    \dot{\alpha}_i &= \omega_i \hspace{10 pt}, \quad i = 1, \dots, N  \\
    \dot{\delta}_{i, *} &= \omega_i - \omega_d(\vec{r}), \hspace{10 pt} \delta_{i,*}(0) = \delta_{i}(0),
\end{cases} \label{eq:augmentado}
\end{align}
where the fundamental identity \eqref{eq:fundamental}, together with the initial condition $\delta_{i,*}(0) = \delta_{i}(0)$, ensures that the solutions to the augmented system will also give a solution to \eqref{ec:dinámica}. Thus, within this new system, we can apply appropriate existence and uniqueness theorems, provided that $\vec{X}(0)$ is non-degenerate and $\nabla \sigma(\vec{r}_c(0)) \neq 0$, since the right-hand side of the system \eqref{eq:augmentado} is locally $L$-Lipschitz in its variables \cite{khalil2015nonlinear, arnold1992ordinary}.

Note that the inclusion of the parameter $\delta_{i,*}$ is purely formal; since we are working in a specific trajectory, this new definition allows us to globally define the oriented angle throughout it, thereby eliminating the pathological discontinuity that occurs when $\vec{m}_i(\alpha_i) = -\vec{m}_d(\vec{R})$.

We aim to ensure that the oriented angle $\delta_{i,*}$ tends to zero so that the velocities and the guiding field align. The most natural approach to achieve this is to choose a control parameter $\omega_i$ that is proportional to $\delta_{i,*}$ and possibly includes an additional term. Specifically, we can design
\begin{equation*}
    \omega_i = -k_\gamma \delta_{i,*} + A_i,
\end{equation*}
where $k_\gamma$ is a positive constant and $A_i$ is a parameter that could potentially be chosen. This gives
\begin{equation*}
    \dot{\delta}_{i,*} = \omega_i - \omega_d = - k_\gamma \delta_{i,*} + (A_i - \omega_d).
\end{equation*}
Herein lies the challenge compared to traditional methods. If we had global knowledge of the field, we could set $A_i = \omega_d$, allowing $\delta_{i,*}$ to decay exponentially to zero. However, since the field is unknown, we cannot determine $\omega_d(t)$. There are two possible approaches to address this issue:

\begin{itemize}
\item We could approximate $A_i \approx \omega_d$, for example, by using data from previously traversed trajectories or by utilizing multiple nearby robot swarms, enabling us to estimate the field’s distortion from the combined data.

\item Alternatively, as we will do in this work, we can disregard the term $A_i$ (by setting it to zero) and try to bound the value of $\omega_d$, such that for sufficiently high values of $k_\gamma$, exponential decay in $\delta_{i,*}$ is still achieved.
\end{itemize}

Note that seeking to ensure that $\omega_d$ is bounded is a natural condition, as this merely means that the field given by $\vec{m}_d$ does not change too abruptly. Specifically, we want the angular velocity $\omega_i$ to be fast enough to track the changes in the field. Henceforth, we will consider the system \eqref{eq:augmentado} with
\begin{equation*}
    \omega_i = -k_\gamma \delta_{i,*}, \hspace{10pt} i = 1, \dots, N,
\end{equation*}
where $k_\gamma$ is a positive constant that will be determined later. With this clarification, we can demonstrate the algorithm's effectiveness for robots with constant-speed unicycle dynamics, leading to the following convergence result.

\begin{theorem}\label{th: main-unic}
    Let $\sigma$ be a signal distribution, $\vec{R}_0 = (\vec{r}_{1,0}, \dots, \vec{r}_{N,0})$ be a swarm with non-degenerate geometry, initial velocity directions $\alpha_0 = (\alpha_{1,0}, \dots, \alpha_{N,0})$, and $\epsilon > 0$ fixed. Then there exists a constant $k_\gamma$ such that for all $i = 1, \dots, N$, the dynamics
    \begin{align*}
\begin{cases}
    \dot{\Vec{r}}_i &= u_r \begin{bmatrix} \cos{(\alpha_i)} \\ \sin{(\alpha_i)} \end{bmatrix}, \hspace{10 pt} \vec{r}_i(0) = \vec{r}_{i,0} \nonumber \\
    \dot{\alpha}_i &= -k_\gamma \delta_{i,*}, \hspace{10 pt} \alpha_i(0) = \alpha_{i,0} \\
    \dot{\delta}_{i, *} &= -k_\gamma \delta_{i,*} - \omega_d(\vec{r}), \hspace{10 pt} \delta_{i,*}(0) = \delta_{i}(0),
\end{cases}
    \end{align*}
    is a solution to the search problem \ref{prob-búsqueda}.
\label{th: mainresult}
\end{theorem}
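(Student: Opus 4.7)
The plan is to exploit the cascade structure of the augmented system \eqref{eq:augmentado}: the $\delta_{i,*}$ equation is driven only by its own state and the exogenous signal $\omega_d(\vec{R})$, while the kinematic equations for $\vec{r}_i$ and $\alpha_i$ are in turn driven through $\delta_{i,*}$. My strategy is to reduce the non-holonomic problem to the free-dynamics case already handled in Theorem \ref{th:convergencia-1}: for $k_\gamma$ large enough, each heading $\vec{m}_i(\alpha_i)$ aligns rapidly with the guiding field $\vec{m}_d$, so the centroid essentially moves along $\vec{L}_\sigma$, and I then close a Lyapunov argument on $V(\vec{r}_c) = \sigma(\vec{r}_c)$.

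The first and hardest step is to obtain a uniform bound on $\omega_d$. By Lemma \ref{lema:derivativeCGI}, $\omega_d$ is the scalar satisfying $\dot{\vec{m}}_d = \omega_d E \vec{m}_d$; differentiating $\vec{m}_d = \vec{L}_\sigma/\norm{\vec{L}_\sigma}$ along the trajectory and using $\dot{\vec{R}} = u_r(\vec{m}_1^T,\dots,\vec{m}_N^T)^T$ together with the bounds \eqref{eq:cotas2} on $\nabla\sigma$ and $H_\sigma$, I would obtain an estimate $|\omega_d| \leq \Omega_0/\norm{\vec{L}_\sigma}$ with $\Omega_0$ depending only on $u_r$, $K$, $M$, $N$ and $D$. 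Combining with the Cauchy-Schwarz consequence of Lemma \ref{lema-Ltograd}, namely $\norm{\vec{L}_\sigma} \geq \norm{\nabla\sigma(\vec{r}_c)}/C(\vec{X})$, this gives $|\omega_d| \leq \Omega_0 C(\vec{X})/\norm{\nabla\sigma(\vec{r}_c)}$, which is uniformly bounded whenever the centroid remains in the complement of a fixed neighborhood of $\vec{r}^*$. The obstacle is precisely that $\omega_d$ can blow up near the source, so the analysis must be confined to the region $\norm{\vec{r}_c - \vec{r}^*} \geq \epsilon/2$ and one must argue that the trajectory reaches the $\epsilon$-neighborhood before ever leaving this safe region.

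Given a uniform bound $|\omega_d(t)| \leq \Omega$ on such a region, the scalar linear ODE $\dot{\delta}_{i,*} = -k_\gamma \delta_{i,*} - \omega_d(t)$ yields
\begin{equation*}
    |\delta_{i,*}(t)| \leq e^{-k_\gamma t}|\delta_{i,*}(0)| + \Omega/k_\gamma,
\end{equation*}
so after a transient of length $O(\log k_\gamma / k_\gamma)$ each $|\delta_{i,*}|$ is at most $2\Omega/k_\gamma$, and during this transient the geometry $\vec{X}(t)$ can change by at most $O(u_r/k_\gamma)$, so it remains non-degenerate with a comparable constant $C(\vec{X})$.

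To close the argument I would decompose $\vec{m}_i = \cos(\delta_{i,*})\vec{m}_d + \sin(\delta_{i,*})E\vec{m}_d$ in $\dot{\vec{r}}_c = \frac{u_r}{N}\sum_i \vec{m}_i$, which yields
\begin{equation*}
    \dot{V} = \frac{u_r}{N}\sum_{i=1}^N \bigl[\cos(\delta_{i,*})\,\nabla\sigma(\vec{r}_c)\cdot\vec{m}_d + \sin(\delta_{i,*})\,\nabla\sigma(\vec{r}_c)\cdot E\vec{m}_d\bigr].
\end{equation*}
By Lemma \ref{lema-Ltograd} the first sum is bounded below by $u_r\cos(2\Omega/k_\gamma)\norm{\nabla\sigma(\vec{r}_c)}^2/(C(\vec{X})\norm{\vec{L}_\sigma})$ once all $|\delta_{i,*}|$ are small, while the second is $O(\Omega/k_\gamma)\norm{\nabla\sigma(\vec{r}_c)}$. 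Outside the $\epsilon$-neighborhood, $\norm{\nabla\sigma(\vec{r}_c)}$ is bounded below by a positive constant, so picking $k_\gamma$ large enough renders $\dot{V}$ strictly positive there. A Barbalat/LaSalle argument exactly as in the proof of Theorem \ref{th:convergencia-1} then forces the centroid into the $\epsilon$-neighborhood in finite time, which is the required solution to Problem \ref{prob-búsqueda}.
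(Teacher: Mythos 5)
Your overall route is the same as the paper's: lower-bound $\norm{\vec{L}_\sigma}$ by $\norm{\nabla\sigma(\vec{r}_c)}/C(\vec{X})$ using non-degeneracy and the geometric stability estimate $\norm{\vec{x}_i(t)-\vec{x}_i(0)} \le 2\pi u_r/k_\gamma$, deduce a bound on $\omega_d$ away from the source, drive the $\delta_{i,*}$ into a small interval by taking $k_\gamma$ large, and conclude that $\sigma(\vec{r}_c(t))$ increases. Your decomposition $\vec{m}_i = \cos(\delta_{i,*})\vec{m}_d + \sin(\delta_{i,*})E\vec{m}_d$ is a quantitative version of the paper's angle argument ($|\theta| \le \arccos(H^*)$ plus a margin $\gamma$ keeps $\vec{m}_i\cdot\nabla\sigma > 0$), and that part is sound.

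However, there is a genuine gap at the end. Problem \ref{prob-búsqueda} requires $\norm{\vec{r}_c(t)-\vec{r}^*} < \epsilon$ for \emph{all} $t \ge t_0$, and your argument only delivers that the centroid \emph{enters} the $\epsilon$-neighborhood in finite time. For constant-speed unicycles this is precisely where the difficulty lies: the swarm cannot stop, and inside the neighborhood of $\vec{r}^*$ your bound $|\omega_d| \le \Omega_0 C(\vec{X})/\norm{\nabla\sigma(\vec{r}_c)}$ degenerates, so the headings can lose alignment with $\vec{m}_d$ and the centroid could in principle exit the ball again. You flag the blow-up of $\omega_d$ near the source but never close this loop. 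The paper resolves it with a two-set construction: it fixes $A = \{\vec{y}\in B : \norm{\nabla\sigma(\vec{y})}\le\epsilon^*\}$ with $d = \mathrm{dist}(A,\mathbb{R}^2\setminus B) > 0$, and chooses $k_\gamma$ large enough (including a term $\frac{2u_r}{d}\ln(\pi/\gamma)$) so that whenever the trajectory leaves $A$, the angles $\delta_{i,*}$ are re-aligned within a travelled distance less than $d$, i.e.\ before the centroid can reach $\partial B$; hence the trajectory is trapped in $B$ forever. Without this re-trapping estimate (or some substitute invariance argument), your proof does not establish the statement as posed. A secondary, fixable imprecision: ``outside the $\epsilon$-neighborhood, $\norm{\nabla\sigma(\vec{r}_c)}$ is bounded below by a positive constant'' is only true on compact sets, so you need to first confine $\vec{r}_c$ to a superlevel set of $\sigma$ (which is bounded because $\sigma\to 0$ at infinity and $V$ is increasing) before extracting a uniform positive lower bound on $\dot V$.
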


Similar to the case with free dynamics, the field given by (\ref{eq:campoguia}) cannot be directly measured by the robots. However, the approximation
\begin{equation}
    \hat{m}_d = \frac{\hat{L}_\sigma(\vec{r}_c, \vec{X})}{\norm{\hat{L}_\sigma(\vec{r}_c, \vec{X})}},
\label{eq:campoaprox}
\end{equation}
is measurable and suffices in most cases. As a corollary to Theorem \ref{th: mainresult}, one could demonstrate the algorithm's effectiveness in finding the maximum of $\sigma$ using the approximated field $\hat{m}_d$ as it was done in the free-dynamics case.

\section{Numerical simulations}

In this section, we discuss the performance of our algorithm in light of numerical verification of the analytical predictions. The implementation of the simulations is divided into two steps: first, we generate random robot swarms, and then, we execute the source-locating algorithm previously described. 

In the first step, we randomly select points in space\footnote{There are many methods to do so. Here, when referring to randomness, we mean normally distributed coordinates.} which will act as formation centres. Five robots are then placed around these centroids, following a random distribution both in terms of their distance from the centre and their angular distribution around it. Consequently, each point results in a swarm of robots with highly varied geometries.

In the second step, each swarm solves the differential system given by Corollary \ref{corollary-convergencia1}, i.e., using $\hat{L}_{\sigma}$, which is the computable value for the free-dynamics robot system, and Theorem \ref{th: mainresult} for swarms with constant-speed unicycle dynamics, where $\vec{m}_d$ is appropriately replaced by $\hat{m}_d$, which is the computable value for the unicycle-dynamics robot system.

For the scalar field $\sigma$, we chose two-dimensional Gaussian functions with a peak at the origin and a standard deviation of ten. Convergence was also verified for signals $\sigma$ represented by quadratic forms with maxima at the origin.

To numerically solve the differential systems, we used the \emph{solve\_ivp} method from \emph{SciPy}, which is based on the Runge-Kutta $45$ method \cite{DORMAND198019}.

\subsection{Numerical simulation of free-dynamics systems}
\label{sec-simulaciones1}

Figure \ref{fig:tray1} shows the trajectories of swarms under a Gaussian signal distribution. The red point marks the initial position of the centroid for each swarm with free dynamics. The figure further demonstrates how the distance between the swarm's centroid and the field's maximum decreases over time, consistent with the behaviour expected from a gradient descent-like approach.

\begin{figure}
    \centering
    \includegraphics[width=0.9\columnwidth]{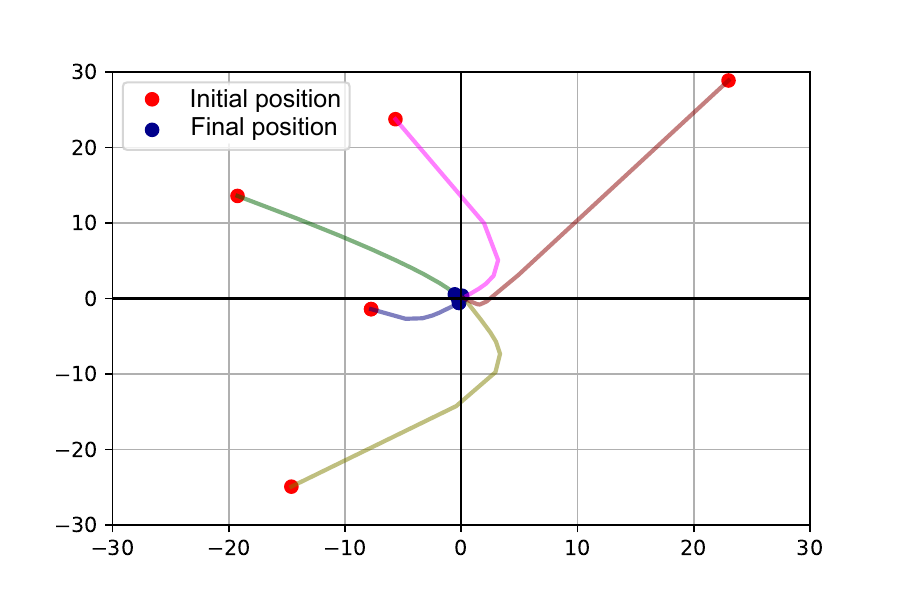}
    \includegraphics[width=0.9\columnwidth]{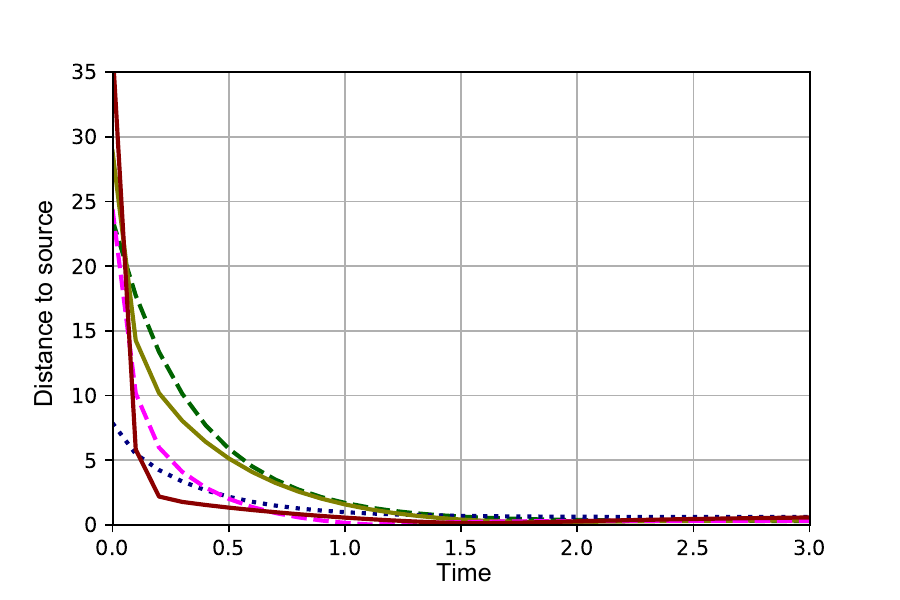}
    \caption{Convergence under a Gaussian signal for swarms with free dynamics randomly distributed in the plane. The top figure details the trajectories of centroids of various swarms with arbitrary non-degenerate geometries. The bottom figure shows the evolution of the distances between the maximum of the field and the centroids of the swarms. Time and distance units are arbitrary.}%
    \label{fig:tray1}%
\end{figure}

It is also noteworthy to examine the behaviour of robot swarms with nearly degenerate geometries, where the swarm's configuration is close to a straight line. Figure \ref{fig:tray2} illustrates examples of trajectories for such nearly degenerate geometries in the horizontal and vertical axes. The figure shows not only the centroid of the swarm (indicated by a red point) but also each individual robot (represented by yellow points) together with their trajectories. In these scenarios, analysis of the equations reveals that convergence occurs rapidly in the direction of degeneration, followed by a more gradual convergence in the perpendicular direction. Overall, the algorithm exhibits high stability across a variety of geometries, including those that are nearly degenerate.

\begin{figure}
    \centering
    \includegraphics[width=0.9\columnwidth]{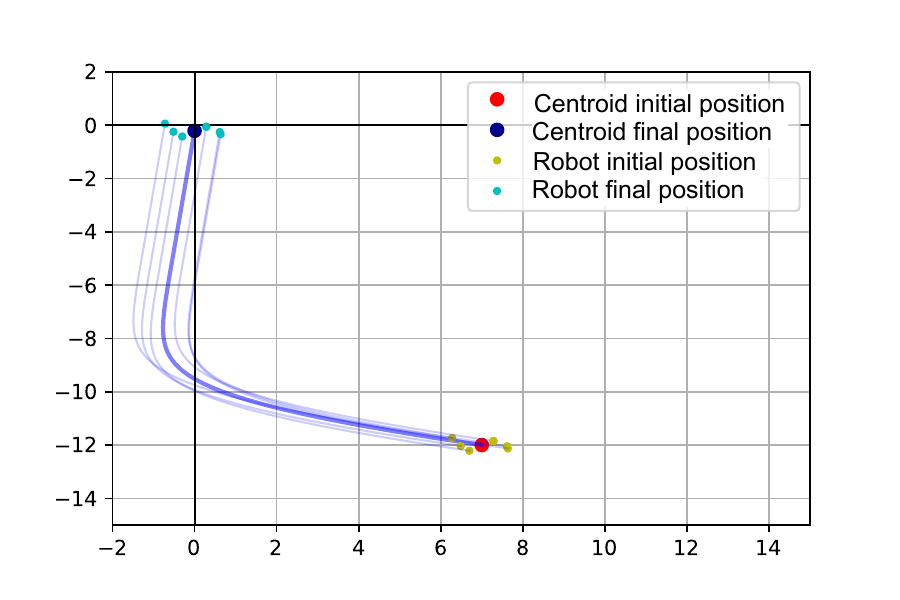} %
    \includegraphics[width=0.9\columnwidth]{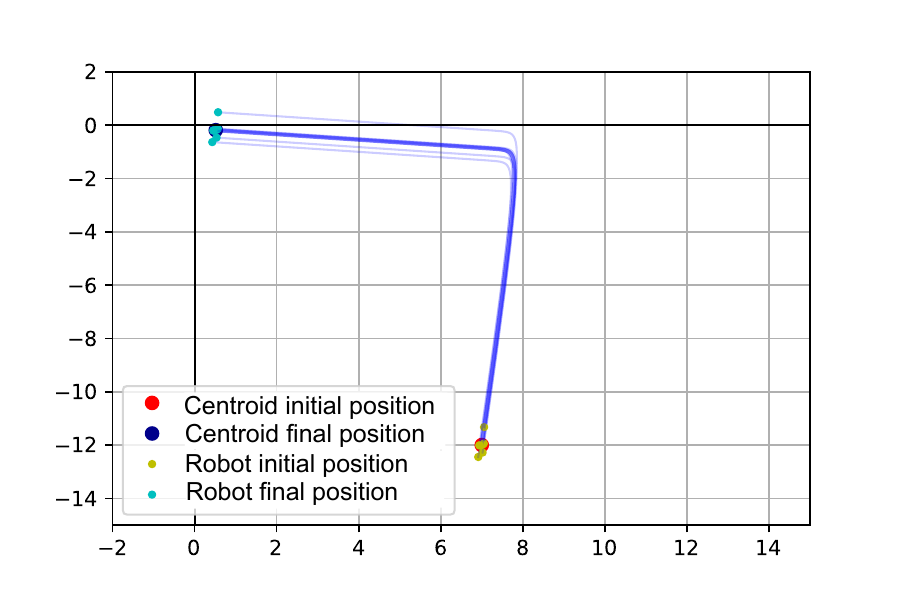} %
    \caption{Convergence of robot swarms with nearly degenerate geometries. The top figure shows a trajectory with nearly degenerate geometry in the horizontal axis. The bottom figure illustrates a trajectory with nearly degenerate geometry in the vertical axis.} %
    \label{fig:tray2}%
\end{figure}

\subsection{Numerical simulation of constant-speed unicycle dynamics}

Analogous to the scenario with robots displaying free dynamics, we aim to accurately represent the information measured by the robots swarm by employing the guiding field approximation given in (\ref{eq:campoaprox}), which serves as an approximate version of (\ref{eq:campoguia}). Similarly to Section \ref{sec-simulaciones1}, we will generate multiple swarms of robots, each with initial velocity directions drawn from a normal distribution. Consequently, at the beginning of the simulation, each robot in the swarm will be oriented in a distinct and random direction. As previously, we will use Gaussian and quadratic signals $\sigma$ with maxima located at the origin.

For the parameter $k_\gamma$, we selected a value such that $u_r / k_\gamma = 1$. Although this might initially seem insufficient to meet the theoretical conditions, practical results have demonstrated its effectiveness.

As previously discussed, the technical conditions are less restrictive than they might appear at first glance. It is also important to note that if $k_\gamma$ were to approach infinity, the results would closely resemble those observed in Section \ref{sec-simulaciones1}, indicating that the unicycles would display behaviour similar to free dynamics. Therefore, this moderately low choice for $k_\gamma$ allows us to examine scenarios with more realistic robot behaviour.

Figure \ref{fig:tray3} illustrates that, unlike the scenario with free dynamics, the centroids do not constantly converge to the maximum but instead tend to orbit around it. This behaviour is better observed in the lower graph of Figure \ref{fig:tray3}, where, once close enough to the maximum, the centroid exhibits periodic oscillations around it.

\begin{figure}
    \centering
    \includegraphics[width=0.9\columnwidth]{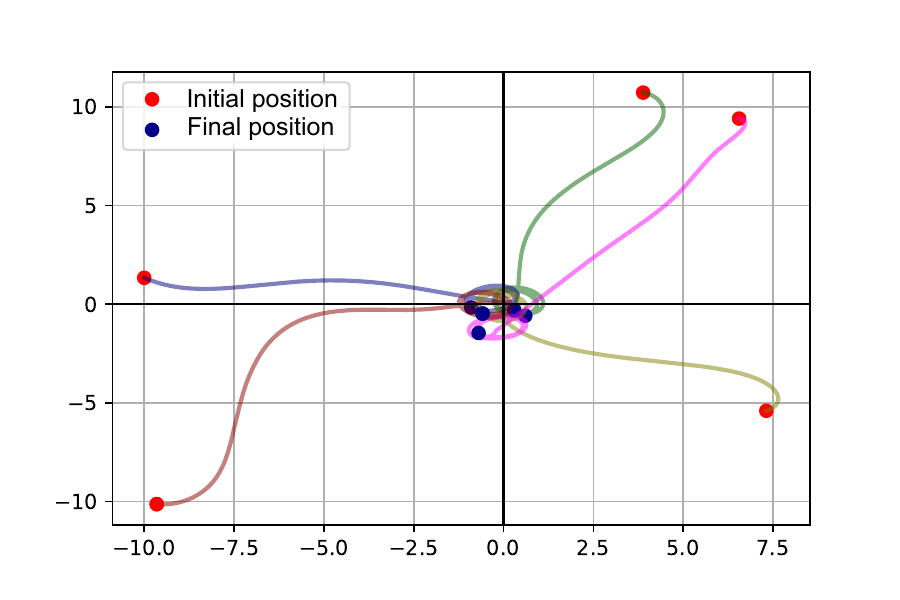} \\%
    \includegraphics[width=0.9\columnwidth]{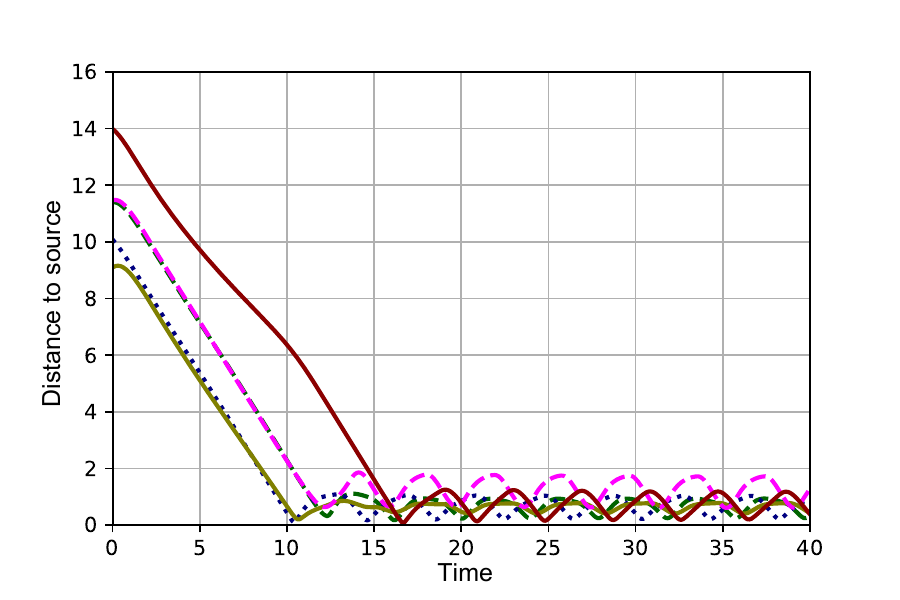}%
    \caption{Convergence of robot swarms with constant-speed unicycle dynamics for a Gaussian signal, using the guiding field algorithm (\ref{eq:campoaprox}). }%
    \label{fig:tray3}%
\end{figure}

In Figure \ref{fig:tray4}, we observe the behaviour of the same swarm geometry under different values of \( k_\gamma \). In the upper figure, a relatively low value of \( k_\gamma \) results in a final geometry that is significantly different from the initial configuration. In contrast, the lower one shows that a high value of \( k_\gamma \) maintains the initial swarm geometry almost unchanged. Moreover, note how the majority of the deformation in the upper graph occurs initially, after which the swarm tends to move almost in unison, as guaranteed by the theoretical lemmas in the appendix.

\begin{figure}
    \centering
    \includegraphics[width=0.9\columnwidth]{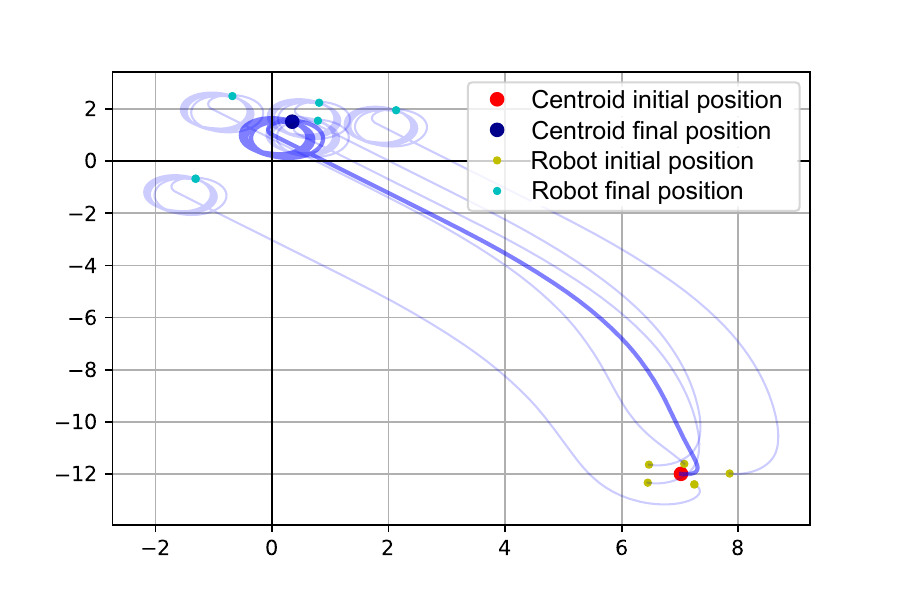} \\
    \includegraphics[width=0.9\columnwidth]{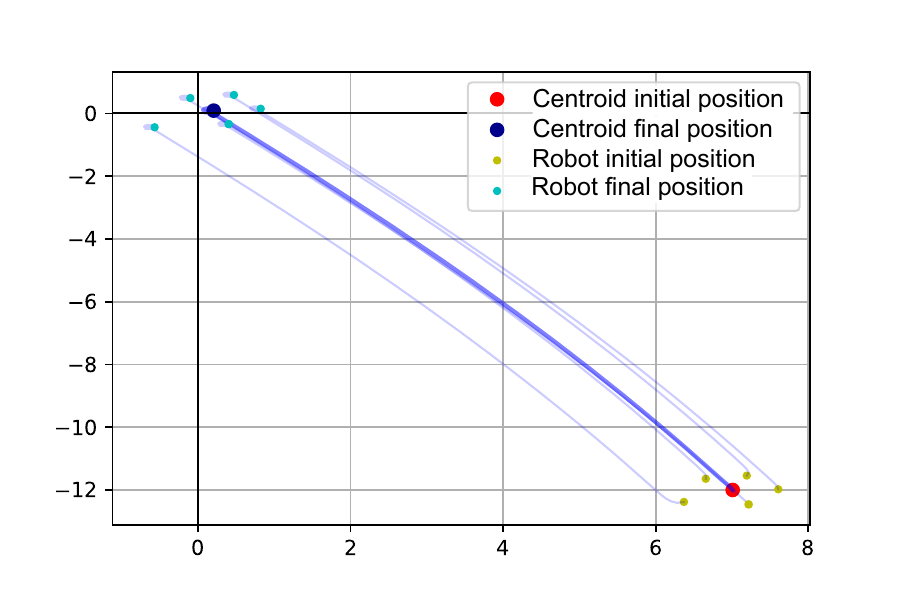}
    \caption{Two trajectories of the same robot swarm for different values of \( k_\gamma \). The upper figure uses a low value of \( k_\gamma \), resulting in a significantly different final geometry. In contrast, the lower one illustrates how a higher value of \( k_\gamma \) maintains the initial swarm geometry almost unchanged. 
    Generally, lower values of \( k_\gamma \) lead to greater differences between the initial and final geometries around the maximum of the field \( \sigma \).}
    \label{fig:tray4}
\end{figure}

\section{Conclusions}

In this paper, we introduce a novel approach to solving the source localisation problem using gradient descent methods. Unlike traditional techniques, our method does not depend on the system's geometry, which enhances its versatility and broadens its range of potential applications. Additionally, by controlling the gradient by our ascending direction, we ensured that the convergence rate of our method matches that of established gradient-based techniques. We provided analytical proof of the method's convergence, which was further validated through numerical simulations.

We applied this method to both free-dynamics and unicycle-dynamics robot swarms, adapting the field-following technique to meet the specific challenges of each scenario. Future research will focus on exploring less restrictive signal models, accommodating measurement noise, handling multiple maxima, and incorporating regions with zero gradients.

\bibliographystyle{IEEEtran}
\bibliography{biblio}

\newpage

\section{Appendix}\label{appendix}

In this appendix, we provide proofs for the results claimed in Section \ref{sec: non-holon}. We start with a proof of Lemma \ref{lema:derivativeCGI}.

\begin{proof}[Proof of Lemma \ref{lema:derivativeCGI}]
Differentiating the equality \( ||\vec{m}_d(t)||^2 = 1 \) gives \(\dot{\vec{m}}_d(t) \cdot \vec{m}_d(t) = 0\). In particular \({\vec{m}}_d(t) \perp \dot{\vec{m}}_d(t)\) at any point of the trajectory and thus \(\dot{\vec{m}}_d(t)\) is proportional to the unit vector \(E \vec{m}_d(t)\). Therefore, we can write \(\dot{\vec{m}}_d(t) = \omega_d(t) E \vec{m}_d(t)\), where the scalar factor \(\omega_d(t)\) is given by
\begin{equation*}
    \omega_d(t) = \dot{\vec{m}}_d(t)^t E \vec{m}_d(t).
\end{equation*}
It remains to show that \(\omega_d\) depends only on the trajectory \((\vec{R}(t), \alpha(t))\) and that this dependence is continuous. This follows directly from the fact that both \(\vec{m}_d(t)\) and \(\dot{\vec{m}}_d(t)\) exhibit continuous dependence on these parameters.
\end{proof}

We now move to the proof of Theorem \ref{th: main-unic}. For this, we first need to address certain technical challenges, which we will do shortly. The first of these results concerns the stability of the swarm's geometry, ensuring that, under specific conditions, we can control the extent of its deformation.

\begin{lemma}\label{lema-estabilidad}
    Suppose that $\vec{R}(t)$ is a trajectory of the differential equations system \eqref{eq:augmentado}. Then
    \begin{equation*}
        \norm{\vec{r}_{i,j}(t) - \vec{r}_{i,j}(0)} \leq 2\pi \frac{u_r}{k_\gamma}, \hspace{10 pt} \forall \hspace{3 pt} 1 \leq i < j \leq N,
    \end{equation*}
    where $\vec{r}_{i,j} = \vec{r}_i - \vec{r}_j$.
\end{lemma}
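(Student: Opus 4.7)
The plan is to exploit the fact that the unknown quantity $\omega_d(\vec{r})$ depends only on the full swarm state and not on the robot index, so it cancels in the difference between robots $i$ and $j$. This will give exponential contraction of the angular mismatch between any two robots, and a simple integration will then bound the change in their relative position.

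Concretely, from the augmented system \eqref{eq:augmentado} with $\omega_i=-k_\gamma\delta_{i,*}$, one immediately gets
\begin{equation*}
\frac{d}{dt}(\delta_{i,*}-\delta_{j,*}) = -k_\gamma(\delta_{i,*}-\delta_{j,*}),
\end{equation*}
since the common $\omega_d$ cancels. Hence $\delta_{i,*}(t)-\delta_{j,*}(t)=B\,e^{-k_\gamma t}$ with $B:=\delta_{i,*}(0)-\delta_{j,*}(0)$. Integrating $\dot\alpha_i-\dot\alpha_j=-k_\gamma(\delta_{i,*}-\delta_{j,*})$ then yields
\begin{equation*}
\alpha_i(t)-\alpha_j(t) = [\alpha_i(0)-\alpha_j(0)] - B\bigl(1-e^{-k_\gamma t}\bigr).
\end{equation*}

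Next I would translate this into a bound on $\vec m_i-\vec m_j$. The key observation is that $\delta_i$ is, by definition, the directed angle from $\vec m_d$ to $\vec m_i$, so $\delta_i(0)\equiv\alpha_i(0)-\alpha_d(\vec R(0))\pmod{2\pi}$; subtracting the analogous relation for $j$ gives $B\equiv\alpha_i(0)-\alpha_j(0)\pmod{2\pi}$. Therefore the constant term in the expression for $\alpha_i(t)-\alpha_j(t)$ above is an integer multiple of $2\pi$, and since $\vec m_i-\vec m_j$ depends on $\alpha_i-\alpha_j$ only modulo $2\pi$, the chord-length formula gives
\begin{equation*}
\|\vec m_i(t)-\vec m_j(t)\| = 2\bigl|\sin\tfrac{\alpha_i(t)-\alpha_j(t)}{2}\bigr| = 2\bigl|\sin(B e^{-k_\gamma t}/2)\bigr| \le |B|\,e^{-k_\gamma t}.
\end{equation*}
Because $\delta_{i,*}(0),\delta_{j,*}(0)\in(-\pi,\pi]$, we have $|B|\le 2\pi$.

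Finally, writing $\dot{\vec r}_{i,j}=u_r(\vec m_i-\vec m_j)$ and integrating from $0$ to $t$,
\begin{equation*}
\|\vec r_{i,j}(t)-\vec r_{i,j}(0)\| \le u_r\int_0^t\|\vec m_i-\vec m_j\|\,ds \le 2\pi u_r\int_0^t e^{-k_\gamma s}\,ds \le \frac{2\pi u_r}{k_\gamma},
\end{equation*}
which is exactly the stated bound. The main subtlety—and the step I expect to be the trickiest to present cleanly—is the $2\pi$-periodicity argument showing that the potentially large constant part of $\alpha_i(t)-\alpha_j(t)$ is a multiple of $2\pi$ and therefore invisible to $\vec m_i-\vec m_j$; everything else is a linear ODE and a routine integration.
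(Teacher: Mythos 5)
Your proof is correct and follows essentially the same route as the paper's: the common $\omega_d$ cancels, the pairwise angular mismatch decays as $B e^{-k_\gamma t}$ with $|B|\le 2\pi$, and the bound $2\left|\sin\left(\tfrac{\theta}{2}\right)\right|\le|\theta|$ together with integration of the exponential gives the stated estimate. The only difference is presentational: you make explicit the mod-$2\pi$ identification between $\delta_{i,*}-\delta_{j,*}$ and $\alpha_i-\alpha_j$, which the paper glosses over by directly writing $\delta_{i,j}=\alpha_j-\alpha_i=\delta_{j,*}-\delta_{i,*}$.
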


\begin{proof}
    Define $\delta_{i,j} = \alpha_j - \alpha_i$, which represents the oriented angle from the velocity of robot $i$ to that of robot $j$. Note that\footnote{Due to how we measure angles, we may measure them along the longer path.} $\delta_{i,j} \in (-2\pi, 2\pi)$, such that $\delta_{i,j} = 0$ indicates that both robots are aligned. It is straightforward to verify that $\delta_{i,j} = \delta_{j,*} - \delta_{i,*}$.
    Hence, we have
    \begin{equation*}
        \dot{\delta}_{i,j} = \omega_j - \omega_i = -k_\gamma (\delta_{j,*} - \delta_{i,*}) = - k_\gamma \delta_{i,j},
    \end{equation*}
    or equivalently, throughout the trajectory of the system
    \begin{equation*}
        \delta_{i,j} = \delta_{i,j}(0) e^{-k_\gamma t},
    \end{equation*}
    so $\delta_{i,j} \to 0$ as $t \to \infty$. Therefore, we can quantify the deformation of the system as
    \begin{align*}
        \dot{\Vec{r}}_{i,j} &= \dot{\Vec{r}}_{i} - \dot{\Vec{r}}_{j} = u_r \begin{bmatrix} \cos(\alpha_i) - \cos{(\alpha_j)}  \\ \sin(\alpha_i) - \sin(\alpha_j) \end{bmatrix} \\
        &= 2u_r \sin\left( \frac{\alpha_j - \alpha_i}{2}\right) \begin{bmatrix} \sin\left( \frac{\alpha_i + \alpha_j}{2}\right) \\ - \cos\left( \frac{\alpha_i + \alpha_j}{2}\right)\end{bmatrix} \\
        &= 2u_r \sin\left( \frac{\delta_{i,j}}{2}\right) \begin{bmatrix} \sin\left( \frac{\alpha_i + \alpha_j}{2}\right) \\ - \cos\left( \frac{\alpha_i + \alpha_j}{2}\right)\end{bmatrix} \\
        &= 2u_r \sin\left(\frac{\delta_{i,j}(0)e^{-k_\gamma t}}{2} \right) \begin{bmatrix} \sin\left( \frac{\alpha_i + \alpha_j}{2}\right) \\ - \cos\left( \frac{\alpha_i + \alpha_j}{2}\right)\end{bmatrix}.
    \end{align*}
    Thus, it follows that
    \begin{equation*}
        ||\dot{\Vec{r}}_{i,j}|| \leq 2u_r \left | \sin\left(\frac{\delta_{i,j}(0)e^{-k_\gamma t}}{2}  \right) \right| \leq u_r |\delta_{i,j}(0)| e^{-k_\gamma t},
    \end{equation*}
    indicating that the position stabilises quickly. Indeed, we can write
    \begin{equation*}
        \Vec{r}_{i,j}(t) = \Vec{r}_{i,j}(0) + \int_0^t \dot{\Vec{r}}_{i,j},
    \end{equation*}
    and hence
    \begin{align*}
        \norm{\Vec{r}_{i,j}(t) - \Vec{r}_{i,j}(0)} &= \norm{ \int_0^t \dot{\Vec{r}}_{i,j}} \leq \int_0^t ||\dot{\Vec{r}}_{i,j}|| \\
         &\leq u_r |\delta_{i,j}(0)| \int_0^t e^{-k_\gamma t} dt \\
         &\leq u_r |\delta_{i,j}(0)| \int_0^\infty e^{-k_\gamma t} dt \\
        &= \frac{u_r |\delta_{i,j}(0)|}{k_\gamma} \leq 2\pi \frac{u_r}{k_\gamma},
    \end{align*}
    where we used that $\delta_{i,j}(0) \in (-2\pi, 2\pi)$.
\end{proof}

\begin{corollary}\label{corolario-estabilidad}
    Suppose that $\vec{R}(t)$ is a trajectory of the differential equations system \eqref{eq:augmentado}. Then, it follows that
    \begin{equation*}
        \norm{\vec{x}_{i}(t) - \vec{x}_{i}(0)} \leq 2\pi \frac{u_r}{k_\gamma}.
    \end{equation*}
\end{corollary}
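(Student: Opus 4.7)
The plan is to reduce this statement to the pairwise bound already established in Lemma \ref{lema-estabilidad} by rewriting $\vec{x}_i$ as an average of pairwise differences. Since the centroid is $\vec{r}_c = \frac{1}{N}\sum_{j=1}^N \vec{r}_j$, we have the identity
\begin{equation*}
    \vec{x}_i = \vec{r}_i - \vec{r}_c = \frac{1}{N}\sum_{j=1}^N (\vec{r}_i - \vec{r}_j) = \frac{1}{N}\sum_{j=1}^N \vec{r}_{i,j},
\end{equation*}
which expresses $\vec{x}_i$ purely in terms of the relative vectors $\vec{r}_{i,j}$ that Lemma \ref{lema-estabilidad} controls.

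Taking differences between time $t$ and time $0$ and applying the triangle inequality, I would then write
\begin{equation*}
    \norm{\vec{x}_i(t) - \vec{x}_i(0)} \;=\; \frac{1}{N}\,\normm{\sum_{j=1}^N \bigl(\vec{r}_{i,j}(t) - \vec{r}_{i,j}(0)\bigr)} \;\leq\; \frac{1}{N}\sum_{j=1}^N \norm{\vec{r}_{i,j}(t) - \vec{r}_{i,j}(0)}.
\end{equation*}
The term $j=i$ contributes $0$ since $\vec{r}_{i,i}\equiv 0$; for each of the remaining $N-1$ terms the bound $2\pi u_r/k_\gamma$ applies, either directly from Lemma \ref{lema-estabilidad} when $i<j$, or via the antisymmetry $\vec{r}_{j,i}=-\vec{r}_{i,j}$ when $i>j$, which leaves the norm unchanged. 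Summing therefore gives $\frac{N-1}{N}\cdot 2\pi u_r/k_\gamma \leq 2\pi u_r/k_\gamma$, which is the claimed estimate.

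There is no real obstacle: the corollary is essentially a bookkeeping step that transfers the pairwise stability bound into a bound on each robot's displacement relative to the centroid. The only subtlety worth noting is the range of indices in Lemma \ref{lema-estabilidad} (stated for $i<j$), which must be handled by the antisymmetry remark above so that every index $j\neq i$ is covered.
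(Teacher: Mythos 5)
Your proof is correct and is precisely the intended derivation: the paper leaves this corollary unproved as an immediate consequence of Lemma \ref{lema-estabilidad}, and your decomposition $\vec{x}_i = \frac{1}{N}\sum_{j}\vec{r}_{i,j}$ followed by the triangle inequality (with the $j=i$ term vanishing and antisymmetry covering $i>j$) is the standard way to fill it in. The resulting constant $\frac{N-1}{N}\cdot 2\pi u_r/k_\gamma$ is in fact slightly sharper than the stated bound, so nothing is missing.
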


The above result ensures that, provided the angular velocity is fast enough relative to the linear one, the system's geometry will stabilise quickly when following the field.

The second result in this section addresses the need to bound $|\omega_d|$ during our trajectory. 
This bounding is not entirely possible because of the following reason: if $\vec{r}^*$ represents a maximum of the signal, then $\grad \sigma(\vec{r}^*) = 0$, meaning that $\vec{m}_d$ is not well-defined at that point\footnote{Technically, this applies to the entire subset of $\mathbb{R}^{2N}$ where $\vec{r}_c = \vec{r}^*$.}. More critically, in a sufficiently small neighbourhood around $\vec{r}^*$, the angular velocity $\omega_d$ changes rapidly, as $\omega_d$ undergoes an \emph{instantaneous} $180$-degree turn when passing through $\vec{r}^*$. To address these technical challenges, we restrict our attention to trajectories that avoid such neighbourhoods, ensuring that our robot system ultimately approaches these regions\footnote{Which is sufficient, as we are locating a region containing the maximum of our signal.}, though not controlling the behaviour within them.

Before we can provide a bound for $\omega_d$, we will need a preliminary result.

\begin{lemma}\label{lema-lower_bound}
    Let \(\vec{R}(t)\) be a trajectory of the system \eqref{eq:augmentado} with a non-degenerate initial geometry \(\vec{X}(0)\). Then, there exist constants \(\kappa, m > 0\) such that
    \begin{equation*}
        \norm{\vec{L}_\sigma(\vec{R}(t))} \geq m \norm{\grad \sigma(\vec{r}_c(t))}
    \end{equation*}
    when \(k_\gamma \geq \kappa\).
\end{lemma}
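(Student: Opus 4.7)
The plan is to reduce the statement to Lemma \ref{lema-Ltograd}, which already controls $\vec{L}_\sigma \cdot \grad \sigma$ from below in terms of $\norm{\grad\sigma}^2$, but only for a \emph{fixed} geometry $\vec{X}$. The subtlety here is that $\vec{X}(t)$ evolves along the trajectory, so a priori the constant $C(\vec{X}(t))$ could degrade or blow up. The key observation is that for large $k_\gamma$, Corollary \ref{corolario-estabilidad} confines $\vec{X}(t)$ to an arbitrarily small neighbourhood of $\vec{X}(0)$, and non-degeneracy is an open condition, so we can secure a uniform constant.

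Concretely, I would first observe that $\vec{X} \mapsto M_*(\vec{X}) := \min_{\norm{v}=1} \frac{1}{ND^2}\sum_{i=1}^N (v \cdot \vec{x}_i)^2$ is continuous in $\vec{X}$ and strictly positive precisely on the non-degenerate set, since non-degeneracy means no unit vector is orthogonal to every $\vec{x}_i$. Because $\vec{X}(0)$ is non-degenerate, I can choose $\eta > 0$ small enough that the closed ball $\mathcal{K}$ of radius $\eta$ around $\vec{X}(0)$ in $\mathbb{R}^{nN}$ consists entirely of non-degenerate geometries. By compactness and continuity, $M_*$ attains a strictly positive minimum on $\mathcal{K}$. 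Inspecting the proof of Lemma \ref{lema-Ltograd}, the constant $C(\vec{X})$ used there depends only on $M_*(\vec{X})$ and its analogous maximum $M^*(\vec{X})$, both of which vary continuously with $\vec{X}$. Thus there is a single $C^* > 0$ with $C(\vec{X}) \leq C^*$ uniformly on $\mathcal{K}$.

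Next, by Corollary \ref{corolario-estabilidad}, $\norm{\vec{x}_i(t) - \vec{x}_i(0)} \leq 2\pi u_r/k_\gamma$ for every $i$ and $t$, which, aggregated over the $N$ components in $\mathbb{R}^{nN}$, gives $\norm{\vec{X}(t) - \vec{X}(0)} \leq 2\pi u_r\sqrt{N}/k_\gamma$. Setting $\kappa := 2\pi u_r \sqrt{N}/\eta$ ensures $\vec{X}(t) \in \mathcal{K}$ for every $t \geq 0$ whenever $k_\gamma \geq \kappa$. Applying Lemma \ref{lema-Ltograd} pointwise in time with the uniform constant $C^*$ yields
\begin{equation*}
    \vec{L}_\sigma(\vec{r}_c(t), \vec{X}(t)) \cdot \grad \sigma(\vec{r}_c(t)) \geq \frac{1}{C^*}\norm{\grad \sigma(\vec{r}_c(t))}^2,
\end{equation*}
and then Cauchy--Schwarz gives $\norm{\vec{L}_\sigma(\vec{R}(t))}\,\norm{\grad\sigma(\vec{r}_c(t))} \geq (1/C^*)\norm{\grad \sigma(\vec{r}_c(t))}^2$; the trivial case $\grad\sigma(\vec{r}_c(t))=0$ being handled separately, the claim follows with $m := 1/C^*$.

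The main obstacle is the step where a \emph{uniform} $C^*$ must be extracted along $\mathcal{K}$: Lemma \ref{lema-Ltograd} is phrased as an existence statement for each fixed $\vec{X}$, and closing this gap requires reopening its proof to note that the constant is built out of continuous data on the geometry. Everything else --- the containment of the trajectory in $\mathcal{K}$, and the passage from a lower bound on $\vec{L}_\sigma \cdot \grad \sigma$ to a lower bound on $\norm{\vec{L}_\sigma}$ --- is either a direct invocation of Corollary \ref{corolario-estabilidad} or a routine Cauchy--Schwarz.
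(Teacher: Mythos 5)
Your proposal is correct and follows essentially the same route as the paper's own proof: confine $\vec{X}(t)$ near $\vec{X}(0)$ via Corollary \ref{corolario-estabilidad} by taking $k_\gamma$ large, extract a uniform constant $C^*$ from Lemma \ref{lema-Ltograd} by continuity of $C(\vec{X})$ and compactness, and conclude by Cauchy--Schwarz with $m = 1/C^*$. The only (cosmetic) difference is that you verify persistence of non-degeneracy through the continuous function $M_*$ and the openness of the non-degenerate set, whereas the paper tracks a fixed pair $\vec{x}_1(t), \vec{x}_2(t)$ remaining a basis; both are valid.
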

\begin{proof}
    Since \(\vec{X}(0)\) is non-degenerate, we can assume without loss of generality that \(\vec{x}_1(0)\) and \(\vec{x}_2(0)\) form a basis. Therefore, a small perturbation of these vectors will still form a basis and thus by Corollary \ref{corolario-estabilidad}, there exists \(\kappa_1 > 0\) such that the vectors \(\vec{x}_1(t)\) and \(\vec{x}_2(t)\) will always form a basis whenever \(k_\gamma \geq \kappa_1\).

    On one hand, we have
    \begin{equation*}
        \vec{L}_\sigma(\vec{R}) \cdot \grad \sigma(\vec{r}_c) \leq \norm{\vec{L}_\sigma(\vec{R})} \norm{\grad \sigma(\vec{r}_c)},
    \end{equation*}
    while by Lemma \ref{lema-Ltograd} and the fact that \(\vec{X}\) remains non-degenerate throughout the trajectory, there exist constants \(C(\vec{X})\) such that
    \begin{equation*}
        \vec{L}_\sigma(\vec{R}) \cdot \grad \sigma(\vec{r}_c) \geq \frac{1}{C(\vec{X})} \norm{\grad \sigma (\vec{r}_c)}^2.
    \end{equation*}
    Furthermore, \(C(\vec{X})\) can be chosen to have a continuous dependence on the geometry \(\vec{X}\). From Corollary \ref{corolario-estabilidad}, by selecting \(\kappa_2\) sufficiently large, we ensure that if \(k_\gamma \geq \kappa_2\), the geometry \(\vec{X}(t)\) is contained within a compact set \(F \subset \mathbb{R}^{2N}\). Consequently, \(C(\vec{X})\) reaches its maximum value on this compact set \(F\), for a certain geometry \(\vec{X}^*\). Thus, taking \(C^* = C(\vec{X}^*)\), it follows that throughout the trajectory
    \begin{equation*}
        \vec{L}_\sigma(\vec{R}) \cdot \grad \sigma(\vec{r}_c) \geq \frac{1}{C(\vec{X})} \norm{\grad \sigma (\vec{r}_c)}^2 \geq \frac{1}{C^*} \norm{\grad \sigma (\vec{r}_c)}^2.
    \end{equation*}
    Combining this with the previous inequality, we get
    \begin{equation*}
        \frac{1}{C^*} \norm{\grad \sigma (\vec{r}_c)}^2 \leq \vec{L}_\sigma(\vec{R}) \cdot \grad \sigma(\vec{r}_c) \leq \norm{\vec{L}_\sigma(\vec{R})} \norm{\grad \sigma(\vec{r}_c)},
    \end{equation*}
    which implies
    \begin{equation*}
        \frac{1}{C^*} \norm{\grad \sigma (\vec{r}_c)} \leq \norm{\vec{L}_\sigma(\vec{R})}.
    \end{equation*}
    Setting \(m = \frac{1}{C^*}\) and \(\kappa = \max \{\kappa_1, \kappa_2\}\) gives the result.
\end{proof}

Equipped with this result, we are now ready to give a bound on the angular velocity.

\begin{lemma}\label{lema-cotaomega}
     For every \(\epsilon > 0\) there exist constants \(\kappa\), \(\Omega_d(\epsilon) > 0\) such that for any $k_\gamma \geq \kappa$ and every trajectory \(\vec{R}(t)\) of the system \eqref{eq:augmentado} with a non-degenerate initial geometry \(\vec{X}(0)\), then  \(\norm{\grad \sigma (\vec{r}_c(t))} \geq \epsilon\) implies \(|\omega_d| \leq \Omega_d\).
\end{lemma}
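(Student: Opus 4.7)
The plan is to bound $|\omega_d|$ by controlling the size of $\dot{\vec{m}}_d$. From Lemma \ref{lema:derivativeCGI} we have $\omega_d = \dot{\vec{m}}_d^{T} E \vec{m}_d$, so $|\omega_d| \leq \|\dot{\vec{m}}_d\|$. Since $\vec{m}_d = \vec{L}_\sigma / \|\vec{L}_\sigma\|$, differentiating this normalization yields
\begin{equation*}
\dot{\vec{m}}_d \;=\; \frac{\dot{\vec{L}}_\sigma}{\|\vec{L}_\sigma\|} \;-\; \frac{(\vec{L}_\sigma \cdot \dot{\vec{L}}_\sigma)\,\vec{L}_\sigma}{\|\vec{L}_\sigma\|^{3}},
\end{equation*}
whence $\|\dot{\vec{m}}_d\| \leq 2\,\|\dot{\vec{L}}_\sigma\|/\|\vec{L}_\sigma\|$. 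The task therefore reduces to a lower bound on the denominator $\|\vec{L}_\sigma\|$ and an upper bound on the numerator $\|\dot{\vec{L}}_\sigma\|$, both uniform along the trajectory.

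For the lower bound I would directly invoke Lemma \ref{lema-lower_bound}: taking $k_\gamma$ above the threshold $\kappa_1$ produced there, one obtains $\|\vec{L}_\sigma(\vec{R}(t))\| \geq m\,\|\grad\sigma(\vec{r}_c(t))\| \geq m\epsilon$ under the standing hypothesis $\|\grad\sigma(\vec{r}_c(t))\| \geq \epsilon$. This is precisely where both the non-degeneracy of $\vec{X}(0)$ and the restriction to a region away from the maximum of $\sigma$ enter.

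For the upper bound I would differentiate the explicit formula $\vec{L}_\sigma(\vec{R}) = \frac{1}{ND^{2}}\sum_i (\grad\sigma(\vec{r}_c)\cdot \vec{x}_i)\,\vec{x}_i$ term by term, producing three groups of contributions: one from $\tfrac{d}{dt}\grad\sigma(\vec{r}_c) = H_\sigma(\vec{r}_c)\dot{\vec{r}}_c$, controlled by $\|H_\sigma\| \leq 2M$ via \eqref{eq:cotas2} together with $\|\dot{\vec{r}}_c\| \leq u_r$; one from $\dot{\vec{x}}_i = \dot{\vec{r}}_i - \dot{\vec{r}}_c$, controlled by $\|\dot{\vec{x}}_i\| \leq 2u_r$ and $\|\grad\sigma\| \leq K$; and one from $\tfrac{d}{dt}(1/D^{2})$. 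The last group is the delicate one, because $D = \max_i \|\vec{x}_i\|$ is only locally Lipschitz. However, for $k_\gamma$ above a second threshold $\kappa_2$, Corollary \ref{corolario-estabilidad} confines $\vec{X}(t)$ to a $\tfrac{2\pi u_r}{k_\gamma}$-neighbourhood of $\vec{X}(0)$, which yields strictly positive two-sided bounds on $D(t)$ depending only on $\vec{X}(0)$, together with the almost-everywhere bound $|\dot{D}| \leq 2 u_r$. Collecting the three groups then produces $\|\dot{\vec{L}}_\sigma(t)\| \leq C_1$ for a constant $C_1$ depending only on $K$, $M$, $u_r$, and $\vec{X}(0)$.

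Setting $\kappa = \max\{\kappa_1,\kappa_2\}$ and $\Omega_d(\epsilon) = 2C_1/(m\epsilon)$ then gives $|\omega_d(t)| \leq \Omega_d(\epsilon)$ as required. I expect the main obstacle to be exactly the non-smoothness of the $D$-normalization; some care is needed either to work with one-sided derivatives almost everywhere or, equivalently, to replace $D$ by a smooth surrogate such as $(\tfrac{1}{N}\sum_i \|\vec{x}_i\|^{2})^{1/2}$. In both cases Corollary \ref{corolario-estabilidad} is exactly the tool that reduces these regularity worries to uniform two-sided bounds on $D$ along the entire trajectory, after which the rest of the argument is a bookkeeping exercise in triangle inequalities.
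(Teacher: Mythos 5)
Your proposal is correct and follows essentially the same route as the paper: bound $|\omega_d|$ by $\|\dot{\vec{m}}_d\| \le 2\|\dot{\vec{L}}_\sigma\|/\|\vec{L}_\sigma\|$, lower-bound the denominator by $m\epsilon$ via Lemma \ref{lema-lower_bound}, and upper-bound the numerator using Corollary \ref{corolario-estabilidad} together with the global bounds on $\nabla\sigma$ and $H_\sigma$, arriving at $\Omega_d = 2C_1/(m\epsilon)$ exactly as in the paper. The only point where you go beyond the paper is your (legitimate) worry about the non-smoothness of $D=\max_i\|\vec{x}_i\|$, which the paper silently ignores; it can in fact be dispensed with entirely, since the positive scalar prefactor $1/(ND^2)$ cancels in the normalization $\vec{m}_d=\vec{L}_\sigma/\|\vec{L}_\sigma\|$, so one may differentiate the smooth sum $\sum_{i}(\nabla\sigma(\vec{r}_c)\cdot\vec{x}_i)\,\vec{x}_i$ instead.
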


\begin{proof}
    Recall that
    \begin{equation*}
        \omega_d(t) = \dot{\Vec{m}}_d(t)^T E \Vec{m}_d(t),
    \end{equation*}
    which implies that
    \begin{equation*}
        |\omega_d(t)| \leq \norm{\dot{\Vec{m}}_d(t)} \norm{\Vec{m}_d(t)} = \norm{\dot{\Vec{m}}_d(t)},
    \end{equation*}
    so a bound on \(\norm{\dot{\Vec{m}}_d(t)}\) gives a bound on \(\omega_d\).

    From the definition of \(\norm{\dot{\Vec{m}}_d(t)}\) we have
    \begin{equation*}
       \norm{\dot{\Vec{m}}_d(t)} = \frac{\sum_{i=1}^N (\vec{\nabla} \sigma (\vec{r}_c) \cdot (\vec{r}_i - \vec{r}_c)) (\vec{r}_i - \vec{r}_c)}{\norm{\sum_{i=1}^N (\vec{\nabla} \sigma (\vec{r}_c) \cdot (\vec{r}_i - \vec{r}_c)) (\vec{r}_i - \vec{r}_c)}},
    \end{equation*}
    and differentiating one of the terms in the sum gives
    \begin{align*}
        \frac{d}{dt}&\left( \vec{\nabla} \sigma (\vec{r}_c) \cdot (\vec{r}_i - \vec{r}_c) (\vec{r}_i - \vec{r}_c) \right) \\
        =& \left( H_\sigma (\vec{r}_c) \cdot (\vec{r}_i - \vec{r}_c) + \vec{\nabla} \sigma (\vec{r}_c) \cdot (\dot{\vec{r}}_i - \dot{\vec{r}}_c) \right) (\vec{r}_i - \vec{r}_c) \\
        &+ \vec{\nabla} \sigma (\vec{r}_c) \cdot (\vec{r}_i - \vec{r}_c) (\dot{\vec{r}}_i - \dot{\vec{r}}_c).
    \end{align*}

    By Corollary \ref{corolario-estabilidad}, on the trajectory we have \(\norm{\vec{r}_i - \vec{r}_c} = \norm{\vec{x}_i} \leq \frac{2\pi u_r}{k_\gamma} + \norm{\vec{x}_i(0)} \leq M_1\), provided \(k_\gamma \geq 1\). Also,
    \begin{equation*}
        \norm{\dot{\vec{r}}_i - \dot{\vec{r}}_c} = \norm{\dot{\vec{r}}_i - \frac{1}{N} \sum_{j=1}^N \dot{\vec{r}}_j} \leq u_r + u_r = 2u_r,
    \end{equation*}
    which is easily bounded. Finally, recall that the Hessian and gradient are globally bounded by
    \begin{equation*}
        \norm{\vec{\nabla} \sigma (\vec{r})} \leq K \quad \text{and} \quad \norm{H_\sigma(\vec{r})} \leq 2M, \quad \forall \vec{r} \in \mathbb{R}^2.
    \end{equation*}
    Thus, it follows that
    \begin{equation*}
        \norm{\frac{d \vec{L}_\sigma(\vec{R})}{dt}} \leq N \left[ 2M M_1^2 + 4u_r K M_1 \right] \leq C_1^*,
    \end{equation*}
    for some global constant \(C_1^*\). On the other hand,
    \begin{equation*}
        \frac{d}{dt} \norm{\vec{L}_\sigma(\vec{R})} = \frac{1}{2 \norm{\vec{L}_\sigma(\vec{R})}} \frac{d}{dt} \norm{\vec{L}_\sigma(\vec{R})}^2 = \frac{\dot{\vec{L}}_\sigma(\vec{r}) \cdot \vec{L}_\sigma(\vec{R})}{\norm{\vec{L}_\sigma(\vec{R})}},
    \end{equation*}
    so
    \begin{equation*}
        \frac{d}{dt} \norm{\vec{L}_\sigma(\vec{R})} \leq \norm{\frac{d \vec{L}_\sigma(\vec{R})}{dt}} \leq C_1^*.
    \end{equation*}
    Combining the above equations, we obtain
    \begin{align*}
        \norm{\frac{d}{dt} \vec{m}_d} &\leq \frac{2 C_1^* \norm{\vec{L}_\sigma(\vec{R})}}{\norm{\vec{L}_\sigma(\vec{R})}^2} = \frac{2 C_1^*}{\norm{\vec{L}_\sigma(\vec{R})}} \leq \frac{2 C_1^*}{m \epsilon} = \Omega_d,
    \end{align*}
    where the last inequality follows by choosing \(k_\gamma \geq \kappa_1\) as given by Lemma \ref{lema-lower_bound}, along with the assumption that \(\norm{\grad \sigma (\vec{r}_c(t))} \geq \epsilon\). Setting \(\kappa = \max \{\kappa_1, 1\}\) yields the result.
\end{proof}

This lemma will be highly useful in conjunction with the following result, which will ultimately provide a means to control the angular difference \(\delta_{i,*}\) throughout the trajectory.

\begin{lemma}\label{lema-gamma}
    For any angle \(0 < \gamma < \pi/2\) and any \(\epsilon > 0\), there exists a constant \(\kappa > 0\) and a time \(t^*\) such that if \(k_\gamma \geq \kappa\) and \(|| \grad \sigma(\vec{r}_c(0))|| > \epsilon\), then \(|\delta_{i,*}(t^*)| < \gamma\) for all \(i = 1, \dots, N\). Furthermore, for \(t_0 \geq t^*\), this bound remains valid as long as \(|| \grad \sigma(\vec{r}_c(t))|| > \epsilon\) for all \(t \in [t^*, t_0)\).
\end{lemma}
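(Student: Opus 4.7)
The key observation is that the third equation of \eqref{eq:augmentado} is a scalar linear ODE
\begin{equation*}
    \dot{\delta}_{i,*} = -k_\gamma \delta_{i,*} - \omega_d(\vec{R}(t)),
\end{equation*}
with bounded forcing whenever Lemma \ref{lema-cotaomega} applies. Variation of parameters gives
\begin{equation*}
    \delta_{i,*}(t) = \delta_{i,*}(0)\, e^{-k_\gamma t} - \int_0^t e^{-k_\gamma(t-s)} \omega_d(\vec{R}(s))\, ds,
\end{equation*}
so that whenever $|\omega_d| \leq \Omega_d$ on the interval in question,
\begin{equation*}
    |\delta_{i,*}(t)| \leq |\delta_{i,*}(0)|\, e^{-k_\gamma t} + \frac{\Omega_d}{k_\gamma}.
\end{equation*}
Since $\delta_{i,*}(0) = \delta_i(0) \in (-\pi, \pi]$, the first term decays exponentially in $t$ while the second can be shrunk by enlarging $k_\gamma$. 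This is the backbone of the argument.

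To make this quantitative, I fix the auxiliary threshold $\epsilon/2$ and invoke Lemma \ref{lema-cotaomega} at that threshold, obtaining constants $\kappa_1$ and $\Omega_d := \Omega_d(\epsilon/2)$. I then take $k_\gamma \geq \max\{\kappa_1, 4\Omega_d/\gamma\}$, which forces $\Omega_d/k_\gamma \leq \gamma/4$, and choose $t^* = k_\gamma^{-1}\log(4\pi/\gamma)$, for which $\pi\, e^{-k_\gamma t^*} \leq \gamma/4$. With these choices, the variation-of-parameters bound gives $|\delta_{i,*}(t^*)| \leq \gamma/2 < \gamma$, \emph{provided} the gradient condition $\norm{\nabla\sigma(\vec{r}_c(t))} \geq \epsilon/2$ holds throughout $[0,t^*]$ (this is exactly what is required to apply Lemma \ref{lema-cotaomega}).

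The main obstacle is upgrading the pointwise initial hypothesis $\norm{\nabla\sigma(\vec{r}_c(0))} > \epsilon$ into a bound that holds on all of $[0, t^*]$. Here I will exploit that the Hessian is globally bounded by $2M$ (cf. \eqref{eq:cotas2}) and that the centroid moves with speed at most $u_r$, so
\begin{equation*}
    \bigl|\, \norm{\nabla\sigma(\vec{r}_c(t))} - \norm{\nabla\sigma(\vec{r}_c(0))} \,\bigr| \leq 2Mu_r\, t.
\end{equation*}
Thus the gradient stays above $\epsilon/2$ for $t \leq \epsilon/(4Mu_r)$. Since $t^* = k_\gamma^{-1}\log(4\pi/\gamma) \to 0$ as $k_\gamma \to \infty$, enlarging $\kappa$ one last time ensures $t^* < \epsilon/(4Mu_r)$, which closes the bootstrap and yields $|\delta_{i,*}(t^*)| < \gamma$.

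For the persistence claim on $[t^*, t_0)$, I simply restart the same variation-of-parameters estimate at $t = t^*$: since $\norm{\nabla\sigma(\vec{r}_c(t))} > \epsilon \geq \epsilon/2$ on that interval, Lemma \ref{lema-cotaomega} continues to deliver $|\omega_d| \leq \Omega_d$, so
\begin{equation*}
    |\delta_{i,*}(t)| \leq |\delta_{i,*}(t^*)|\, e^{-k_\gamma(t-t^*)} + \frac{\Omega_d}{k_\gamma} \leq \frac{\gamma}{2} + \frac{\gamma}{4} = \frac{3\gamma}{4} < \gamma,
\end{equation*}
and the bound is preserved for all such $t$, as claimed.
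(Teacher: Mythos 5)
Your proof is correct and rests on the same mechanism as the paper's --- for large \(k_\gamma\) the linear damping \(-k_\gamma \delta_{i,*}\) dominates the bounded forcing \(\omega_d\) --- but the execution differs in two ways. First, where you integrate the ODE explicitly via variation of parameters and read off \(|\delta_{i,*}(t)| \leq |\delta_{i,*}(0)|e^{-k_\gamma t} + \Omega_d/k_\gamma\), the paper runs a forward-invariant-interval argument: it shows \(\dot{\delta}_{i,*} \leq -k_\gamma \delta_{i,*}/2\) whenever \(\delta_{i,*} \geq \gamma\) (and symmetrically for \(\delta_{i,*}\leq-\gamma\)), so \((-\gamma,\gamma)\) is attractive and is entered before \(t^*\). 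Your version is arguably cleaner and makes the persistence claim a one-line restart of the same estimate, whereas the paper gets persistence from invariance of the interval. Second, and more substantively, you handle the bootstrap (keeping the gradient bounded below on \([0,t^*]\) so that Lemma \ref{lema-cotaomega} applies) differently: the paper sets \(t^* = d/(2u_r)\) with \(d = \dist(\vec{r}_c(0), \{\vec{y} : \norm{\grad\sigma(\vec{y})}\leq\epsilon\}) > 0\) and uses \(\norm{\dot{\vec{r}}_c}\leq u_r\); you instead use the global Hessian bound to make \(t \mapsto \norm{\grad\sigma(\vec{r}_c(t))}\) Lipschitz in time with constant \(2Mu_r\), drop the working threshold to \(\epsilon/2\), and shrink \(t^*\) below \(\epsilon/(4Mu_r)\). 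Both are valid; yours yields a \(t^*\) and \(\kappa\) that are uniform in the initial position (depending only on \(\epsilon\), \(\gamma\), \(M\), \(u_r\)), which the paper's \(d\)-dependent choice is not. One cosmetic point: as written your \(t^* = k_\gamma^{-1}\log(4\pi/\gamma)\) varies with \(k_\gamma\), while the statement fixes \(t^*\) before quantifying over all \(k_\gamma \geq \kappa\); this is repaired by freezing \(t^* = \kappa^{-1}\log(4\pi/\gamma)\) once \(\kappa\) is chosen, since both terms of your bound are nonincreasing in \(k_\gamma\) at fixed \(t^*\) and the bootstrap constraint \(t^* < \epsilon/(4Mu_r)\) is unaffected.
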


\begin{proof}
    Let \(\gamma\) and \(\epsilon > 0\) be fixed and let \(A = \left\{\vec{y}\in \mathbb{R}^2 \mid \norm{\grad \sigma(\vec{y})} \leq \epsilon \right\}\). Since \(|| \grad \sigma(\vec{r}_c(0))|| > \epsilon\), then \( \text{dist}(\vec{r}_c(0), A) = d > 0 \). 
    
    In particular, since \(||\dot{\vec{r}}_c|| \leq u_r\)  for \(t^* = \frac{d}{2u_r}\), it follows that \(|| \grad \sigma(\vec{r}_c(t))|| > \epsilon\) for all \(t \in [0, 2t^*]\).

    Therefore, in the interval \([0, t^*]\), we are under the conditions of Lemma \ref{lema-cotaomega}. That is, there exists \(\kappa_1\) such that for \(k_\gamma \geq \kappa_1\), we have \(|\omega_d| \leq \Omega_d\). In particular, for all \(k_\gamma \geq \kappa = \max\left\{\kappa_1, \frac{2\Omega_d}{\gamma}, \frac{2}{t^*}\ln{\left(\frac{\pi}{\gamma}\right)}\right\}\), it follows that if \(\delta_{i,*} \geq \gamma\), then
    \begin{equation*}
        \dot{\delta}_{i,*} \leq -k_{\gamma} \delta_{i,*} + \Omega_d \leq -k_\gamma \delta_{i,*} + k_\gamma \frac{\delta_{i,*}}{2} =  -k_\gamma \frac{\delta_{i,*}}{2}.
    \end{equation*}
    Conversely, if \(\delta_{i,*} \leq - \gamma\), we have
    \begin{equation*}
        \dot{\delta}_{i,*} \geq - k_\gamma \delta_{i,*} - \Omega_d \geq - k_\gamma \frac{\delta_{i,*}}{2}.
    \end{equation*}
    This implies that \((- \gamma, \gamma)\) is an attractive interval. Specifically, once within this interval, and provided the gradient bound is maintained, the uniqueness of solutions guarantees that it is impossible to exit it.

    Therefore, we only need to show that we enter this interval at some \(t < t^*\). If \(|\delta_{i,*}(0)| \leq \gamma\), we have already shown that we remain within the interval. Otherwise, assume \(\delta_{i,*}(0) \geq \gamma\); the case \(\delta_{i,*}(0) \leq -\gamma\) follows similarly. 

    While \(\delta_{i,*}(t)\) is not within the interval \((- \gamma, \gamma)\), we have
    \begin{equation*}
        \dot{\delta}_{i,*} \leq -k_\gamma \delta_{i,*}/2 \implies \delta_{i,*}(t) \leq \delta_{i,*}(0)e^{-k_\gamma t/2}.
    \end{equation*}
    For \(t = t^*\) the right-hand side of the inequality is strictly less than \(\gamma\) and thus, at some point, we must have entered the interval. Since the interval is attractive, we will remain within it at time \(t^*\), as required. The fact that the bound on \(\delta_{i,*}\) continues to hold while \(|| \grad \sigma(\vec{r}_c(t))|| > \epsilon\) is automatic.
\end{proof}

Finally, it is important to note that these technical difficulties have minimal relevance in practical applications and are primarily necessary for formal mathematical treatment. Specifically, conditions such as the non-degeneracy of the geometry \(\vec{X}\), while crucial for theoretical development, are virtually negligible in practical applications. Similarly, bounds on \(\omega_d\) can be treated as assumptions of favourable behaviour regarding the \(\vec{m}_d\) field if deemed appropriate. In summary, these technical lemmas justify the intuitive notion that the robots' angular velocity is sufficiently rapid to effectively follow changes in the field and that once the robots align with the field, their geometry will remain nearly rigid.

Using the results previously established, we are now prepared to provide proof of convergence for the system described in \eqref{eq:augmentado}, at least up to regions where the gradient is small. Here, the smallness of the gradient is controlled by the value of \(\epsilon\) that we select.

\begin{proof}[Proof of Theorem \ref{th: main-unic}]
We will show that there exists a value \( k_\gamma \) such that the centroid of the trajectories is confined within the open set
\begin{equation*}
    B = \left\{\vec{y} \in \mathbb{R}^2: \|\vec{r}^* - \vec{y}\| < \epsilon \right\}.
\end{equation*}
As before, we assume without loss of generality that \(\vec{x}_1(0)\) and \(\vec{x}_2(0)\) form a basis. Reasoning as in Lemma \ref{lema-lower_bound}, we can choose \(\kappa_1\) sufficiently large so that \(\vec{x}_1(t)\) and \(\vec{x}_2(t)\) always form a basis of the space and \(\vec{X}\) remains within a compact set.

We first wish to study the angle between our guiding field \(\vec{m}_d\) and \(\nabla \sigma (\vec{r}_c)\). This can be examined by considering the angle between \(\frac{\vec{L}_\sigma(\vec{R})}{\|\nabla \sigma (\vec{r}_c)\|}\) and \(\frac{\nabla \sigma (\vec{r}_c)}{\|\nabla \sigma (\vec{r}_c)\|}\), which we denote by \(\theta(t)\). Taking the scalar product gives
\begin{align*}
     H\left(\vec{X}, \frac{\nabla \sigma (\vec{r}_c)}{\|\nabla \sigma (\vec{r}_c)\|}\right) &:= \frac{\vec{L}_\sigma(\vec{R}(t)) \cdot \nabla \sigma(\vec{r}_c)}{\|\nabla \sigma(\vec{r}_c(t))\|^2} \\
     &=
         \frac{1}{ND^2}\sum_{i=1}^N \left| \frac{\nabla \sigma(\vec{r}_c)}{\|\nabla \sigma(\vec{r}_c)\|} \cdot \vec{x}_i \right|^2 > 0,
\end{align*}
so \(H\) is a continuous function in its parameters. The first parameter \(\vec{X}\) is contained in a compact set \(F\), while the second parameter is in the unit sphere \(\mathbb{S}\), so the function \(H\) achieves a minimum value \(H^* > 0\) in the compact set \(F \times \mathbb{S}\). Thus,
\begin{equation*}
    0 < H^* \leq \frac{\vec{L}_\sigma(\vec{R}) \cdot \nabla \sigma(\vec{r}_c)}{\|\nabla \sigma(\vec{r}_c)\|^2} = \cos{(\theta)} \frac{\|\vec{L}_\sigma(\vec{R})\|}{\|\nabla \sigma(\vec{r}_c)\|} \leq  \cos{(\theta)},
\end{equation*}
where in the last inequality we have used that
\begin{align*}
    \|\vec{L}_\sigma(\vec{R})\| &= \frac{1}{ND^2} \|\sum_{i=1}^N (\nabla \sigma(\vec{r}_c) \cdot \vec{x}_i) \vec{x}_i\|\\
    &\leq \frac{\|\nabla \sigma(\vec{r}_c)\| }{N} \sum_{i=1}^N \frac{\|\vec{x}_i\|^2}{D^2} \leq \|\nabla \sigma(\vec{r}_c)\|.
\end{align*}
Therefore, we quickly obtain
\begin{equation*}
    0 < H^* \leq  \cos{(\theta)},
\end{equation*}
so that \( |\theta| \leq \arccos{(H^*)} < \frac{\pi}{2} \). In particular, there exists \(\gamma > 0\) such that \( |\theta(t) + \gamma| < \frac{\pi}{2} \) when moving throughout the trajectory.

Let \(\epsilon^*\) be small enough so that
\begin{equation*}
    A = \left\{\vec{y} \in B: \|\nabla \sigma(\vec{y})\| \leq \epsilon^* \right\} \subset  B
\end{equation*}
satisfies
\begin{equation*}
    d = \text{dist}\left(A, \mathbb{R}^2 \backslash B \right) > 0.
\end{equation*}
Applying Lemma \ref{lema-gamma} for this \(\epsilon^*\) and \(\gamma\), there exists \(\kappa_2\) such that if \(k_\gamma \geq \kappa_2\), then there exists \(t^*\) such that \( |\delta_{i,*}(t)| < \gamma \) for all \( i = 1, \dots, N \) from this time \(t^*\) until \(\|\nabla \sigma(\vec{r}_c(t))\| \leq \epsilon^*\). In particular, while the conditions hold, the function \(\sigma(\vec{r}_c(t))\) is increasing since
\begin{align*}
    \frac{d}{dt} \sigma(\vec{r}_c(t)) &= \frac{d \vec{r}_c(t)}{dt} \cdot \nabla \sigma(\vec{r}_c(t)) \\
    &= \frac{u_r}{N} \sum_{i=1}^N \vec{m}_i(\alpha_i(t)) \cdot \nabla \sigma(\vec{r}_c(t)) > 0,
\end{align*}
where we have used that \(\vec{m}_i(t) \cdot \nabla \sigma(\vec{r}_c(t)) > 0\) by the previously proven angular relation. Reasoning as in Theorem \ref{th:convergencia-1}, the magnitude of the field in the trajectory will increase until it enters the set \(A\). Once inside, we can no longer control \(\sigma(\vec{r}_c(t))\), but we continue to know that \(\vec{X}\) is stable. Thus, if the trajectory escapes from \(A\), reasoning as in Lemma \ref{lema-gamma}, if we take \(k_\gamma \geq \kappa_3 = \max{\left\{\kappa_4, \frac{2\Omega_d}{\gamma}, \frac{2u_r}{d} \ln{\left(\frac{\pi}{\gamma} \right)}\right\}}\) where \(\kappa_4\) is given by\footnote{Technically, one would need to be careful to ensure that the angle is within $(-2\pi, 2\pi)$, this does not constitute a problem, only notational care, since we can always normalize the angles if they fall outside of this interval.} Lemma \ref{lema-cotaomega} with \(\|\nabla (\sigma(\vec{r}_c(t)))\| \geq \epsilon^*\), then \( |\delta_{i,*}(t)| < \gamma \) for all \( i = 1, \dots, N \) before it can escape \(B\), so it will return close to the maximum until falling into \(A\) repeatedly. Therefore, for \(k_\gamma \geq \kappa = \max{\{\kappa_1, \kappa_2, \kappa_3\}}\), the trajectory remains trapped in the set \(B\). In particular, there exists a time \(t_0\) such that for all \(t \geq t_0\), \(\|\vec{r}^* - \vec{r}_c(t)\| < \epsilon\), as desired.
\end{proof}

Lastly, note that in contrast to Theorem \ref{th:convergencia-1} where the dynamics were valid for any \(\epsilon\), in this case, the dynamics depend on the chosen \(\epsilon\) in the search problem. This is natural. Since robots never stop moving they end up rotating around the maximum, while the ratio \(u_r/k_\gamma \) determines how large the region over which the swarm oscillates will be.

\end{document}